\newabbreviation[
  longplural={Gaussian Processes}
]{gp}{gp}{Gaussian Process}
\newabbreviation[
]{clt}{clt}{Central Limit Theorem}
\newabbreviation[
  longplural={Hilbert--space Gaussian Processes}
]{hgp}{hgp}{Hilbert--space Gaussian Process}
\newabbreviation[
  longplural={Tensor Networks}
]{tn}{tn}{Tensor Network}
\newabbreviation[
]{cpd}{cpd}{Canonical Polyadic Decomposition}
\newabbreviation[
]{tt}{tt}{Tensor Train}
\newabbreviation[
]{bf}{bf}{Basis Function}
\newabbreviation[
]{ml}{ml}{maximum likelihood}
\newabbreviation[
]{map}{map}{maximum a posteriori}
\newabbreviation[
]{dnn}{dnn}{Deep Neural Network}
\newabbreviation[
]{cnn}{cnn}{Convolutional Neural Network}
\newabbreviation[
]{rnn}{rnn}{Recurrent Neural Network}
\newabbreviation[
]{gnn}{gnn}{Graph Neural Network}
\newabbreviation[
]{mera}{mera}{Multi-Scale Entanglement Renormalization Ansatz}
\newabbreviation[
]{pdf}{pdf}{Probability Density Function}
\newabbreviation[
]{cdf}{cdf}{Cumulative Density Function}
\newabbreviation[
]{rmse}{rmse}{Root Mean Squared Error}
\theoremstyle{plain}
\newtheorem{theorem}{Theorem}[section]
\newtheorem{lemma}[theorem]{Lemma}
\newtheorem{corollary}[theorem]{Corollary}
\theoremstyle{definition}
\newtheorem{definition}[theorem]{Definition}
\theoremstyle{remark}
\title{Tensor Network-Constrained Kernel Machines as Gaussian Processes}
\crefname{equation}{equation}{equations}
\Crefname{equation}{Equation}{Equations}
\crefname{theorem}{theorem}{theorem}
\Crefname{theorem}{Theorem}{Theorems}
\crefname{definition}{definition}{definitions}
\crefname{corollary}{corollary}{corollaries}
\Crefname{corollary}{Corollary}{Corollaries}
\crefname{lemma}{lemma}{lemmas}
\Crefname{lemma}{Lemma}{Lemmas}
\crefname{appendix}{appendix}{appendices}
\crefname{section}{section}{sections}
\crefname{figure}{figure}{figures}
\newcommand{\mat}[1]{\bm{#1}}
\newcommand{\ten}[1]{\bm{\mathcal{#1}}}
\newcommand{\inner}[2]{\left\langle #1, #2 \right\rangle}
\newcommand{\innerfrob}[2]{{\langle #1, #2 \rangle}_\mathrm{F}}
\newcommand{\norm}[1]{{\left\lvert\left\lvert #1 \right\rvert\right\rvert}_\mathrm{F}}
\newcommand{\vectorize}[1]{\operatorname{\ff{vec}}\left(#1\right)}
\newcommand{\tensorize}[1]{\operatorname{\ff{ten}}\left(#1\right)}
\newcommand{\kron}{\otimes}
\newcommand{\trans}{^\mathrm{T}}
\newcommand{\order}[1]{$\mathcal{O}(#1)$}
\newcommand{\eye}[1]{\text{I}_{#1}}
\newcommand{\TN}[1]{\ff{TN}(#1)}
\newcommand{\CPD}[1]{\ff{CPD}(#1)}
\newcommand{\TT}[1]{\ff{TT}(#1)}
\newcommand{\RANKONE}[1]{\ff{R}_1(#1)}
\newcommand{\hiddenCP}{h}
\newcommand{\hiddenTT}[1]{{h}^{(#1)}}
\newcommand{\hiddenTTMat}[1]{\mat{h}^{(#1)}}
\newcommand{\tempTTMat}[1]{\mat{Z}^{(#1)}}
\DeclareMathOperator{\isdef}{\coloneqq}
\DeclareMathOperator{\E}{\mathbb{E}}
\newif\ifuniqueAffiliation
\author{Frederiek Wesel\\
	Delft Center for Systems and Control\\
	Delft University of Technology\\
	The Netherlands\\
	\texttt{f.wesel@tudelft.nl} \\
	\And
	Kim Batselier\\
	Delft Center for Systems and Control\\
	Delft University of Technology\\
	The Netherlands\\
	\texttt{k.batselier@tudelft.nl} \\
}
\begin{document}
\maketitle

\begin{abstract}
    \glspl*{tn} have recently been used to speed up kernel machines by constraining the model weights, yielding exponential computational and storage savings.
    In this paper we prove that the outputs of \gls*{cpd} and \gls*{tt}-constrained kernel machines recover a \gls*{gp}, which we fully characterize, when placing i.i.d. priors over their parameters.
    We analyze the convergence of both \gls*{cpd} and \gls*{tt}-constrained models, and show how \gls*{tt} yields models exhibiting more \gls*{gp} behavior compared to \gls*{cpd}, for the same number of model parameters.
    We empirically observe this behavior in two numerical experiments where we respectively analyze the convergence to the \gls*{gp} and the performance at prediction.
    We thereby establish a connection between \gls*{tn}-constrained kernel machines and \glspl*{gp}.
\end{abstract}

\section{Introduction}
\emph{Tensor Networks} \citep[\glspl*{tn},][]{cichocki_era_2014,cichocki_tensor_2016,cichocki_tensor_2017}, a tool from multilinear algebra, extend the concept of rank from matrices to tensors allowing to represent an exponentially large object with a linear number of parameters.
As such, \glspl*{tn} have been used to reduce the storage and computational complexities by compressing the model parameters of a range of models such as \glspl*{dnn} \citep{novikov_tensorizing_2015}, \glspl*{cnn} \citep{jaderberg_speeding_2014,lebedev_speeding-up_2015}, \glspl*{rnn} \citep{ye_learning_2018}, \glspl*{gnn} \citep{hua_high-order_2022} and transformers \citep{ma_tensorized_2019-1}.

Similarly, \glspl*{tn} have also found application in the context of kernel machines \citep{stoudenmire_supervised_2016,novikov_exponential_2018,wesel_large-scale_2021}.
Such models learn a multilinear (i.e. nonlinear) data-dependent representation from an exponentially large number of fixed features by means of a linear number of parameters, and are as such characterized by an implicit source of regularization. 
Furthermore, storage and the evaluation of the model and its gradient require a linear complexity in the number of parameters, rendering these methods promising candidates for applications requiring both good generalization and scalability.
However their multilinearity precludes closed-form Bayesian inference, and has restricted the training of these models to the \gls*{ml} and \gls*{map} framework.

In contrast, \emph{Gaussian Processes} \citep[\glspl*{gp},][]{rasmussen_gaussian_2006} are an established framework for modeling functions which naturally allow the practitioner to incorporate prior knowledge.
Importantly, when considering i.i.d. observations and Gaussian likelihoods, \glspl*{gp} allow for the determination of the posterior in \emph{closed-form}, which considerably facilitates tasks such as inference, sampling and the construction of sparse approximations among many others.
The main drawback of having a closed-form posterior lies however in their inability to autonomously learn features, which has arguably favored the use of deep learning models.

In this paper we establish a connection between \gls*{tn}-constrained kernel machines and \glspl*{gp}, thus solving an open problem considered by \citet{wesel_large-scale_2021,wesel_tensor-based_2023}. We prove that the outputs of \emph{Canonical Polyadic Decomposition} (\gls*{cpd}) and \emph{Tensor Train} (\gls*{tt})-constrained kernel machines converge to a fully characterized \gls*{gp} when specifying i.i.d. priors across their components. 
This result allows us to derive that that for the same number of model parameters, \gls*{tt}-constrained models achieve faster convergence to the \gls*{gp} compared to their \gls*{cpd} counterparts and thus are more prone to exhibit \gls*{gp} behavior.
We analyze the consequences of these findings in the context of \gls*{map} estimation and finally empirically observe \gls*{gp} convergence and \gls*{gp} behavior in two numerical experiments.

The rest of this paper is organized as follows. In \cref{sec:background} we provide a brief introduction to \glspl*{gp} and their approximations, \glspl*{tn} and \gls*{tn}-constrained kernel machines. In \cref{sec:main} we present our main result, i.e. the equivalence in the limit between \gls*{tn}-constrained kernel machines and \glspl*{gp}. In \cref{sec:experiments} we discuss our numerical results which illustrate the found results. We then provide a review of related work (\cref{sec:related_work}) and a conclusion (\cref{sec:conclusion}).
We discuss the notation used throughout the paper in \cref{sec:notation}.
\section{Background}\label{sec:background}
\glspl*{gp} are a collection of random variables such that any finite subset has a joint Gaussian distribution \citep{rasmussen_gaussian_2006}. They provide a flexible formalism for modeling functions which inherently allows for the incorporation of prior knowledge and the production of uncertainty estimates in the form of a predictive distribution.
More specifically, a \gls*{gp} is fully specified by a mean function $\mu \in \mathbb{R}$, typically chosen as zero, and a covariance or kernel function $k(\cdot,\cdot):\mathbb{R}^D\times \mathbb{R}^D \rightarrow \mathbb{R}$:
\begin{equation*}\label{eq:GP}
    f(\mat{x}) \sim \mathcal{GP}(\mu,k(\mat{x},\cdot)).
\end{equation*}
Given a labeled dataset $\{(\mat{x}_n,y_n)\}_{n=1}^{N}$ consisting of inputs $\mat{x}_n\in \mathbb{R}^D$ and i.i.d. noisy observations $y_n\in\mathbb{R}$, \glspl*{gp} can be used for modeling the underlying function $f$ in classification or regression tasks by specifying a likelihood function.
For example the likelihood
\begin{equation}\label{eq:likelihood}
    p(y_n \mid f(\mat{x}_n)) = \mathcal{N}(f(\mat{x}_n),\sigma^2),
\end{equation}
yields a \gls*{gp} posterior which can be obtained in closed-form by conditioning the prior \gls*{gp} on the noisy observations.
Calculating the mean and covariance of such a posterior crucially requires instantiating and formally inverting the kernel matrix $\mat{K}$ such that $k_{n,m} \isdef k(\mat{x}_n,\mat{x}_m)$. These operations respectively incur a computational cost of \order{N^2} and \order{N^3} and therefore prohibit the processing of large-sampled datasets.

\subsection{Basis Function Approximation}
The prevailing approach in literature to circumvent the \order{N^3} computational bottleneck is to project the \gls*{gp} onto a finite number of \glspl*{bf} \citep[e.g.,][]{rasmussen_gaussian_2006,quinonero-candela_unifying_2005}. This is achieved by approximating the kernel as
\begin{equation}
    {k}(\mat{x},\mat{x'}) \approx {\mat{\varphi}(\mat{x})}\trans \mat{\Lambda} \ \mat{\varphi}(\mat{x'}),
\end{equation}
where here $\mat{\varphi}(\mat{x}):\mathbb{R}^D\rightarrow\mathbb{R}^M$ are (nonlinear) basis functions and $\mat{\Lambda}\in\mathbb{R}^{M\times M}$ are the \gls*{bf} weights.
This finite-dimensional kernel approximation ensures a \emph{degenerate} kernel \citep{rasmussen_gaussian_2006}, as it characterized by a finite number of non-zero eigenvalues.
Its associated \gls*{gp} can be characterized equivalently as
\begin{equation}\label{eq:low_rank_GP}
    f(\mat{x}) = \inner{\mat{\varphi}(\mat{x})}{\mat{w}}, \quad \mat{w}\sim \mathcal{N}(\mat{0},\mat{\Lambda}),
\end{equation}
wherein $\mat{w}\in\mathbb{R}^{M}$ are the model weights and $\mat{\Lambda}$ is the associated prior covariance.
Once more considering a Gaussian likelihood (\cref{eq:likelihood}) yields a closed-form posterior \gls*{gp} whose mean and covariance require only the inversion of the matrix ${\sum_{n=1}^N\mat{\varphi}(\mat{x}_n){\mat{\varphi}(\mat{x}_n)}\trans}$. This yields a computational complexity of $\mathcal{O}(NM^2+M^3)$, which allows to tackle large-sampled data when $N \gg M$.

\subsection{Product Kernels}
In the remainder of this paper we consider \glspl*{gp} with product kernels.
\begin{definition}[Product kernel \citep{rasmussen_gaussian_2006}]\label{def:product_kernel}
A kernel $k(\mat{x},\mat{x'})$ is a product kernel if
    \begin{equation}\label{eq:product_kernel}
    k(\mat{x},\mat{x'}) = \prod_{d=1}^D k^{(d)} (x_d,x_d'),
\end{equation}
where each $k^{(d)} (\cdot,\cdot) : \mathbb{R}\times\mathbb{R}\rightarrow \mathbb{R}$ is a valid kernel.
\end{definition}
While many commonly used kernels are product kernels e.g. the Gaussian kernel and the polynomials kernel, product kernels provide a straightforward strategy to extend one-dimensional kernels to the higher-dimensional case \citep{rasmussen_gaussian_2006,hensman_variational_2017}.
The basis functions and prior covariance of product kernels can then be determined based on the basis function expansion of their constituents as follows.
\begin{lemma}[Basis functions and prior covariances of product kernels]\label{lemma:product_kernels}
    Consider the product kernel of \cref{def:product_kernel}. Denote the basis functions and prior covariance of each factor $k^{(d)} (x_d,x_d')$ as ${\mat{\varphi}^{(d)}(x_d)\in\mathbb{R}^{M_d}}$ and $\mat{\Lambda}^{(d)}\in\mathbb{R}^{M_d\times M_d}$ respectively, then the basis functions and prior covariance of $k(\mat{x},\mat{x'})$ are
    \begin{equation}\label{eq:tensor-product-basis}
    \mat{\varphi}(\mat{x}) = \otimes_{d=1}^D \mat{\varphi}^{(d)}(x_d),
    \end{equation}
    and
    \begin{equation}\label{eq:tensor-product-prior}
        \mat{\Lambda} = \otimes_{d=1}^D \mat{\Lambda}^{(d)},
    \end{equation}
\end{lemma}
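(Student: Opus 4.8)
The plan is to verify directly that the proposed $\mat{\varphi}(\mat{x})$ and $\mat{\Lambda}$ reproduce the product kernel through the degenerate-kernel bilinear form, i.e.\ to show that $k(\mat{x},\mat{x'}) = \mat{\varphi}(\mat{x})\trans \mat{\Lambda}\, \mat{\varphi}(\mat{x'})$ when $\mat{\varphi}$ and $\mat{\Lambda}$ are the claimed Kronecker products. By hypothesis each factor admits the expansion $k^{(d)}(x_d,x_d') = \mat{\varphi}^{(d)}(x_d)\trans \mat{\Lambda}^{(d)} \mat{\varphi}^{(d)}(x_d')$, so substituting into \cref{eq:product_kernel} turns the product kernel into a product of $D$ scalar bilinear forms, and the task reduces to rewriting this product as a single bilinear form in the Kronecker-structured objects.

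The key algebraic tool is the mixed-product property of the Kronecker product, $(\mat{A}\otimes\mat{B})(\mat{C}\otimes\mat{D}) = (\mat{A}\mat{C})\otimes(\mat{B}\mat{D})$, together with the transpose identity $\left(\otimes_{d=1}^D \mat{A}^{(d)}\right)\trans = \otimes_{d=1}^D (\mat{A}^{(d)})\trans$. First I would transpose $\mat{\varphi}(\mat{x}) = \otimes_{d=1}^D \mat{\varphi}^{(d)}(x_d)$ and then apply the mixed-product property twice to the resulting triple product, which yields
\begin{equation*}
\mat{\varphi}(\mat{x})\trans \mat{\Lambda}\, \mat{\varphi}(\mat{x'}) = \bigotimes_{d=1}^D \left( \mat{\varphi}^{(d)}(x_d)\trans \mat{\Lambda}^{(d)} \mat{\varphi}^{(d)}(x_d') \right).
\end{equation*}
Each factor inside the Kronecker product is a $1\times 1$ matrix equal to $k^{(d)}(x_d,x_d')$, and since the Kronecker product of scalars is ordinary multiplication, the right-hand side collapses to $\prod_{d=1}^D k^{(d)}(x_d,x_d') = k(\mat{x},\mat{x'})$, as required.

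Finally, to confirm that $\mat{\Lambda}$ is a legitimate prior covariance as used in \cref{eq:low_rank_GP}, I would note that symmetry and positive semidefiniteness are preserved under Kronecker products: since each $\mat{\Lambda}^{(d)}$ is a valid covariance, so is $\otimes_{d=1}^D \mat{\Lambda}^{(d)}$, whose eigenvalues are the products of the eigenvalues of the factors. I do not anticipate a genuine obstacle here; the statement is essentially a bookkeeping consequence of the mixed-product property, and the only points requiring care are the dimension accounting, which gives $M = \prod_{d=1}^D M_d$, and the identification of each Kronecker factor block with the scalar $k^{(d)}(x_d,x_d')$.
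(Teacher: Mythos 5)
Your proof is correct. The paper itself states \cref{lemma:product_kernels} without proof (it is treated as a standard fact following from the degenerate-kernel form and \cref{def:product_kernel}), so there is no authorial argument to compare against; your verification via the mixed-product property $(\mat{A}\kron\mat{B})(\mat{C}\kron\mat{D})=(\mat{A}\mat{C})\kron(\mat{B}\mat{D})$, the identification of each $1\times 1$ Kronecker factor with $k^{(d)}(x_d,x_d')$, and the remark that positive semidefiniteness is preserved under Kronecker products is exactly the argument the authors are implicitly relying on, and it is complete as written.
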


The inherent challenge in this approach stems from the exponential increase of the number of basis functions $M$ and thus of model parameters as a function of the dimensionality of the input data, thereby restricting their utility to low-dimensional datasets.

Such structure arises for instance when dealing with Mercer expansions of product kernels, in the structured kernel interpolation framework \citep{wilson_kernel_2015,yadav_faster_2021} variational Fourier features framework \citep{hensman_variational_2017} and Hilbert-GP framework \citep{solin_hilbert_2020}.
Alternative important approximation strategies which avoid this exponential scaling are random features \citep{rahimi_random_2007,lazaro-gredilla_sparse_2010}, inducing features \citep{csato_sparse_2002,seeger_fast_2003,quinonero-candela_unifying_2005,snelson_sparse_2006,hensman_gaussian_2013,hensman_scalable_2015} and additive \glspl*{gp} \citep{duvenaud_additive_2011,lu_additive_2022} which circumvent the outlined computational issue.
All those approaches can be interpreted as projecting the \gls*{gp} on a set of \glspl*{bf}.

The performance of these methods however tends to deteriorate in higher dimensions, as they need to cover an exponentially large domain with a linear number of random samples or inducing points. These issues are some of the computational aspects of the \emph{curse of dimensionality}, which renders it difficult to operate in high-dimensional feature spaces \citep{hastie_elements_2001}.
\subsection{Tensor Networks}
A recent alternative approach to remedy said curse of dimensionality affecting the exponentially increasing weights of the linear model in \cref{eq:low_rank_GP} consists in constraining the models weights $\mat{w}$ to be a low-rank tensor network. 
\glspl*{tn} express a $D$-dimensional tensor $\ten{W}$ as a multi-linear function of a number of \emph{core} tensors.
Two commonly used  \glspl*{tn}  are the \gls*{cpd} and \gls*{tt}, defined as follows.
\begin{definition}[\gls*{cpd} \citep{hitchcock_expression_1927}]\label{def:CPD}
A $D$-dimensional tensor $\ten{W}\in\mathbb{R}^{M_1\times M_2 \times \cdots \times M_D}$ has a rank-$R$ \gls*{cpd} if
\begin{equation}\label{eq:CPD}
    w_{m_1,m_2,\dots, m_D}= \sum_{r=1}^{R} \prod_{d=1}^D {w^{(d)}}_{m_d, r}.
\end{equation}
The cores of a \gls*{cpd} are the matrices $\mat{W}^{(d)}\in\mathbb{R}^{M_d \times R}$.
Since a \gls*{cpd} tensor can be expressed solely in terms of its cores, its storage requires $P_{\ff{CPD}} = R \sum_{d=1}^D M_d$ parameters as opposed to $\prod_{d=1}^D M_d$.
\end{definition}
\begin{definition}[\gls*{tt} \citep{oseledets_tensor-train_2011}]\label{def:TT}
A $D$-dimensional tensor $\ten{W}\in\mathbb{R}^{M_1 \times M_2 \times  \cdots \times  M_D}$ admits a rank-$(R_0\isdef 1,R_1,\ldots,R_D\isdef 1)$ tensor train if
\begin{equation}
      w_{m_1,m_2,\ldots, m_D} =  \sum_{r_0=1}^{R_0} \sum_{r_1=1}^{R_1} \cdots \sum_{r_{D}=1}^{R_{D}} \prod_{d=1}^D {w^{(d)}}_{r_{d-1}, m_d, r_{d}}.
\end{equation}
The cores of a tensor train are $D$ $3$-dimensional tensors $\ten{W}^{(d)}\in\mathbb{R}^{R_{d-1}\times M \times R_{d}}$ which yield ${P_{\ff{TT}} = \sum_{d=1}^D  R_{D-1} M_D R_{D}}$ parameters. 
\end{definition}

In the following we denote by $\TN{\ten{W}}$ a tensor which admits a general \gls*{tn} format, by $\CPD{\ten{W}}$ a tensor which admits a rank-$R$ CP form and by $\TT{\ten{W}}$ a tensor in rank-$(R_0\isdef 1,R_1,\ldots,R_D\isdef 1)$ \gls*{tt} form. Lastly, we denote by $\RANKONE{\ten{W}}$ a tensor which is in rank-$1$ CP form or rank-$(1,1,\ldots,1)$ \gls*{tt}, as both are equivalent.

Importantly, we refer to a tensor in general \gls*{tn} format $\TN{\ten{W}} \in \mathbb{R}^{M_1\times M_2 \times \cdot \times M_D}$ as \emph{underparametrized} if its rank hyperparameters, e.g. $R$ in case of \gls*{cpd}, are chosen such that its storage cost is less than $\prod_{d=1}^D M_d$.  This is crucial in order to obtain storage, and as we will see, computational benefits.

\subsection{Tensor Network-Constrained Kernel Machines}\label{sub:tnkm}
 \glspl*{tn}  have been used to reduce the number of model parameters in kernel machines (\cref{eq:low_rank_GP}) by tensorizing the \glspl*{bf} $\mat{\varphi}(\mat{x})$ and model weights $\mat{w}$ and by constraining both to be underparameterized \glspl*{tn}.
This approach lays its foundations on the fact that the Frobenius inner product of a tensorized vector is isometric with respect to the Euclidean inner product, i.e.
\begin{equation}
    f(\mat{x}) = \inner{\mat{\varphi}(\mat{x})}{\mat{w}} = \innerfrob{\tensorize{\mat{\mat{\varphi}(\mat{x})}}}{\tensorize{\mat{w}}}.
\end{equation}
This isometry allows then to constrain the \glspl*{bf} and the model weights to be an underparameterized \gls*{tn}. 
Since product kernels yield an expansion in terms of Kronecker-product \glspl*{bf} (\cref{eq:tensor-product-basis}), they are a rank-$1$ \gls*{tn} by definition after tensorization. Embedding these relations yields an approximate model
\begin{equation}\label{eq:TN_MODEL}
     f(\mat{x}) \approx f_{\ff{TN}}(\mat{x}) \isdef \innerfrob{\RANKONE{\tensorize{\mat{\varphi}(\mat{x})}}}{\TN{\tensorize{\mat{w}}}},
\end{equation}
characterized by lower storage and computational complexities.
This approach has been proposed mostly for weights modeled as \gls*{cpd} \citep{kargas_supervised_2021,wesel_large-scale_2021,wesel_tensor-based_2023} or \gls*{tt} \citep{wahls_learning_2014,stoudenmire_supervised_2016,batselier_tensor_2017-1,novikov_exponential_2018,chen_parallelized_2018} as they arguably introduce fewer hyperparameters (only one in case of \gls*{cpd}) and thus are in practice easier to work with compared to other \glspl*{tn} such as the \gls*{mera} \citep{reyes_multi-scale_2021}.

We define such models as we will need them in detail in the next section, where we present our main contribution.

\begin{definition}[\gls*{cpd}-constrained kernel machine]\label{def:cpdkm}
     The \gls*{cpd}-constrained kernel machine is defined as
    \begin{align}
        f_{\ff{CPD}}(\mat{x}) &\isdef \innerfrob{\RANKONE{\tensorize{\mat{\varphi}(\mat{x})}}}{\CPD{\tensorize{\mat{w}}}} \label{eq:f_CPD} \\
        &= \sum_{r=1}^R  \hiddenCP_r(\mat{x}),
    \end{align}
    where the intermediate variables $h_r\in\mathbb{R}$ are defined as
    \begin{equation}
        \hiddenCP_r(\mat{x}) \isdef \prod_{d=1}^D {\mat{\varphi}^{(d)}(x_d)} \trans{\mat{w}^{(d)}}_{:,r} \label{eq:addend_CPD}. 
    \end{equation}
\end{definition}
Similarly, we provide a definition for the \gls*{tt}-constrained kernel machine.
\begin{definition}[\gls*{tt}-constrained kernel machine]\label{def:ttkm}
    The \gls*{tt}-constrained kernel machine is defined as
    \begin{align}
        f_{\ff{TT}}(\mat{x}) &\isdef \innerfrob{\RANKONE{\tensorize{\mat{\varphi}(\mat{x})}}}{\TT{\tensorize{\mat{w}}}} \label{eq:f_TT} \\
        &=\sum_{r_{D}=1}^{R_{D}} \sum_{r_{D-1}=1}^{R_{D-1}} \cdots \sum_{r_0=1}^{R_0}\prod_{d=1}^D z^{(d)}_{r_{d-1}, r_{d}} (x_d),
    \end{align}
    where the intermediate variables $\tempTTMat{d}\in\mathbb{R}^{R_{d-1} \times R_d}$ are defined element-wise as
    \begin{equation}
         z^{(d)}_{r_{d-1},r_d} (x_d) \isdef \sum_{m_d = 1}^{M_d} \varphi^{(d)}_{m_d}(x_d) w^{(d)}_{r_{d-1}, m_d , r_{d}}.
    \end{equation}
\end{definition}
Evaluating \gls*{cpd} and \gls*{tt}-constrained kernel machines (\cref{eq:f_CPD}, \cref{eq:f_TT}) and their gradients can be accomplished with \order{P_{\ff{CPD}}} and \order{P_{\ff{TT}}} computations, respectively. This allows the practitioner to tune the rank hyperparameter in order to achieve a model that fits in the computational budget at hand and that learns from the specified \glspl*{bf}.

From an optimization point-of-view, models in the form of \cref{eq:TN_MODEL} have been trained both in the \gls*{ml} \citep{stoudenmire_supervised_2016,batselier_tensor_2017-1} and in the \gls*{map} setting \citep{wahls_learning_2014,novikov_exponential_2018,chen_parallelized_2018,kargas_supervised_2021,wesel_large-scale_2021,wesel_tensor-based_2023} and in the context of \gls*{gp} variational inference \citep{izmailov_scalable_2018} where \glspl*{tt} are used to parameterize the variational distribution. 
It is however not clear if and how these models relate to the weight-space \gls*{gp} \cref{eq:low_rank_GP}.

In the following section we present the main contribution of our work: we show how when placing i.i.d. priors on the cores of these approximate models, they converge to a \gls*{gp} which we fully characterize.

\section{\gls*{tn}-Constrained Kernel Machines as \glspl*{gp}}\label{sec:main}

\begin{figure*}
    \centering
    \subfigure[$P_{\ff{CPD}}=\num{1600}$]{\includegraphics[width=0.25\linewidth]{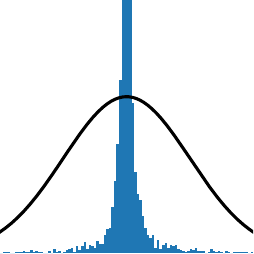}\label{fig:hist_cpd_1}}\hfill
    \subfigure[$P_{\ff{CPD}}=\num{1600000}$]{\includegraphics[width=0.25\linewidth]{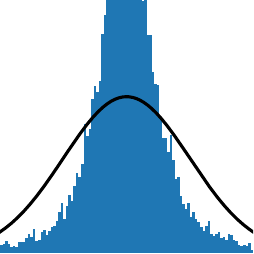}\label{fig:hist_tt_1}}\hfill
    \subfigure[$P_{\ff{TT}}=\num{1320}$]{\includegraphics[width=0.25\linewidth]{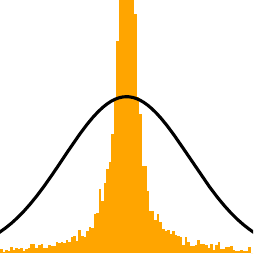}\label{fig:hist_cpd_2}}\hfill
    \subfigure[$P_{\ff{TT}}=\num{1605000}$]{\includegraphics[width=0.25\linewidth]{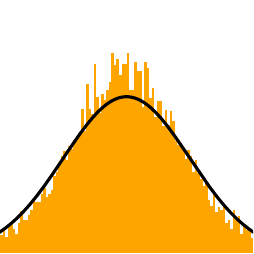}\label{fig:hist_tt_2}}\hfill
    \caption{Histograms of the empirical \gls*{pdf} of \gls*{cpd} (blue) and \gls*{tt} (orange) models specified in \cref{thm:CPD,thm:TT} evaluated at a random point as a function of model parameters $P$ for $D=16$.
    The black line is the \gls*{pdf} of the \gls*{gp}.
    Notice how \gls*{tt} converges faster to the \gls*{gp} for the same number of model parameters $P$.}
    \label{fig:histograms}
\end{figure*}

We commence to outline the correspondence between \gls*{tn}-constrained kernel machine and \glspl*{gp}, which makes use of the \gls*{clt}. We begin by elucidating the simplest case, i.e. the \gls*{cpd}.

\begin{theorem}[\gls*{gp} limit of \gls*{cpd}-constrained kernel machine]\label{thm:CPD}
    Consider the \gls*{cpd}-constrained kernel machine
    \begin{equation*}
        f_{\ff{CPD}}(\mat{x}) \isdef \innerfrob{\RANKONE{\tensorize{\mat{\varphi}(\mat{x})}}}{\CPD{\tensorize{\mat{w}}}}.
    \end{equation*}
    If each of the $R$ columns ${\mat{w}^{(d)}}_{:,r}\in\mathbb{R}^{M_d}$ of each \gls*{cpd} core is an i.i.d. random variable such that
    \begin{align*}\label{eq:CPD_prior}
        \E \left[\mat{w}^{(d)}_{:,r}\right] &= 0, \\
        \E\left[\mat{w}^{(d)}_{:,r} {\mat{w}^{(d)}_{:,r}}\trans\right] &= {R^{-\frac{1}{D}}} \mat{\Lambda}^{(d)},
    \end{align*}
    then $f_{\ff{CPD}}(\mat{x})$ converges in distribution as $R\rightarrow\infty$ to the \gls*{gp} 
    \begin{equation*}
        f_{\ff{CPD}}(\mat{x}) \sim \mathcal{GP}\left(0, \prod_{d=1}^D {\mat{\varphi}^{(d)}(x_d)}\trans\mat{\Lambda}^{(d)} \mat{\varphi}^{(d)}(\cdot)\right).
    \end{equation*}
\end{theorem}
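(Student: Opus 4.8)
The plan is to establish convergence to the claimed \gls*{gp} by showing that all finite-dimensional marginals of $f_{\ff{CPD}}$ are asymptotically jointly Gaussian with the stated covariance, since a process is a \gls*{gp} exactly when every finite collection of its evaluations is jointly Gaussian. First I would fix an arbitrary finite set of inputs $\mat{x}_1,\dots,\mat{x}_L$ and stack the rank-$r$ contributions of \cref{eq:addend_CPD} into the random vector $\mat{h}_r \isdef (\hiddenCP_r(\mat{x}_1),\dots,\hiddenCP_r(\mat{x}_L))\trans\in\mathbb{R}^L$. Because the columns ${\mat{w}^{(d)}}_{:,r}$ are drawn i.i.d. across the rank index $r$, the vectors $\mat{h}_1,\dots,\mat{h}_R$ are i.i.d., and the vector of model outputs is simply $\mat{f}\isdef\sum_{r=1}^R \mat{h}_r$, whose $\ell$-th entry is $f_{\ff{CPD}}(\mat{x}_\ell)$. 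The sum-of-i.i.d. structure is what makes the \gls*{clt} the natural tool.

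The second step is to isolate the $R$-dependence so that the classical \gls*{clt} applies without a triangular-array argument. Writing ${\mat{w}^{(d)}}_{:,r} = R^{-1/(2D)}\,{\mat{u}^{(d)}}_{:,r}$ with $\E[{\mat{u}^{(d)}}_{:,r}]=0$ and $\E[{\mat{u}^{(d)}}_{:,r}{{\mat{u}^{(d)}}_{:,r}}\trans]=\mat{\Lambda}^{(d)}$ free of $R$, each of the $D$ factors in \cref{eq:addend_CPD} contributes a factor $R^{-1/(2D)}$, so that $\hiddenCP_r(\mat{x}) = R^{-1/2}\,g_r(\mat{x})$ where $g_r(\mat{x}) \isdef \prod_{d=1}^D {\mat{\varphi}^{(d)}(x_d)}\trans {\mat{u}^{(d)}}_{:,r}$ does not depend on $R$. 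Collecting the $g_r(\mat{x}_\ell)$ into $\mat{g}_r\in\mathbb{R}^L$, we obtain $\mat{f} = R^{-1/2}\sum_{r=1}^R \mat{g}_r$, a normalized sum of i.i.d. mean-zero random vectors.

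Third, I would compute the first two moments of $\mat{g}_r$, exploiting that the columns are independent across \emph{both} the rank index $r$ and the core index $d$, so that expectations of products over $d$ factorize. The mean vanishes since $\E[g_r(\mat{x})]=\prod_{d=1}^D {\mat{\varphi}^{(d)}(x_d)}\trans \E[{\mat{u}^{(d)}}_{:,r}]=0$, and the $(\ell,\ell')$ entry of the covariance is
\begin{equation*}
    \E\!\left[g_r(\mat{x}_\ell)\,g_r(\mat{x}_{\ell'})\right] = \prod_{d=1}^D {\mat{\varphi}^{(d)}(x_{\ell,d})}\trans \mat{\Lambda}^{(d)}\, {\mat{\varphi}^{(d)}(x_{\ell',d})},
\end{equation*}
where $x_{\ell,d}$ is the $d$-th coordinate of $\mat{x}_\ell$. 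Collecting these entries into a matrix $\mat{\Sigma}\in\mathbb{R}^{L\times L}$, we recognize by \cref{lemma:product_kernels} that $\mat{\Sigma}$ is precisely the Gram matrix of the product kernel $\prod_{d=1}^D {\mat{\varphi}^{(d)}(x_d)}\trans\mat{\Lambda}^{(d)}\mat{\varphi}^{(d)}(\cdot)$ evaluated at the chosen inputs; note also that the aggregate covariance of $\mat{f}$ is $R\cdot R^{-1}\mat{\Sigma}=\mat{\Sigma}$, independent of $R$.

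Finally I would invoke the \gls*{clt} through the Cramér--Wold device. For any $\mat{a}\in\mathbb{R}^L$ the scalar $\mat{a}\trans\mat{f} = R^{-1/2}\sum_{r=1}^R \mat{a}\trans\mat{g}_r$ is a normalized sum of i.i.d. mean-zero scalars with finite variance $\mat{a}\trans\mat{\Sigma}\mat{a}$, so Lindeberg--L\'evy gives $\mat{a}\trans\mat{f}\xrightarrow{d}\mathcal{N}(0,\mat{a}\trans\mat{\Sigma}\mat{a})$; holding for every $\mat{a}$, this yields $\mat{f}\xrightarrow{d}\mathcal{N}(\mat{0},\mat{\Sigma})$, and since the inputs were arbitrary the stated \gls*{gp} follows. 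The main obstacle is conceptual rather than computational: the randomness entering each $\hiddenCP_r$ is a \emph{product} of $D$ independent factors, so the individual summands are non-Gaussian and Gaussianity can emerge only in the limit over the rank $R$. The $R^{-1/D}$ prior scaling is exactly what makes the per-term variances compound to $R^{-1}$, keeping the limiting covariance finite and nondegenerate; the only genuine care needed is verifying the finite-variance hypothesis (which holds because a product of finite-second-moment factors has a finite second moment) and ensuring \emph{joint} rather than merely marginal Gaussianity, which the Cramér--Wold reduction supplies.
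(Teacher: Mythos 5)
Your proposal is correct and follows essentially the same route as the paper's proof: decompose $f_{\ff{CPD}}$ into the $R$ i.i.d.\ rank-wise contributions $\hiddenCP_r$, show they have zero mean and covariance $\tfrac{1}{R}\prod_{d=1}^D {\mat{\varphi}^{(d)}(x_d)}\trans\mat{\Lambda}^{(d)}\mat{\varphi}^{(d)}(x_d')$ by factorizing the expectation over the independent cores, and apply the multivariate \gls*{clt} to the finite-dimensional marginals. Your explicit $R^{-1/2}$ normalization and Cram\'er--Wold reduction merely make precise steps the paper leaves implicit.
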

\begin{proof}
    See \cref{proof:cpd}.
\end{proof}
A similar result can be constructed for the \gls*{tt} case.
\begin{theorem}[\gls*{gp} limit of \gls*{tt}-constrained kernel machine]\label{thm:TT}
    Consider the \gls*{tt}-constrained kernel machine
    \begin{equation*}
        f_{\ff{TT}}(\mat{x}) \isdef \innerfrob{\RANKONE{\tensorize{\mat{\varphi}(\mat{x})}}}{\TT{\tensorize{\mat{w}}}}
    \end{equation*}
    If each of the $R_{d-1} R_d$ fibers ${\mat{W}^{(d)}}_{r_{d-1},:,r_{d}}\in\mathbb{R}^{M_d}$ of each \gls*{tt} core is an i.i.d. random variable such that
    \begin{align*}\label{eq:TT_prior}
        \E \left[\mat{W}^{(d)}_{r_{d-1},:,r_{d}}\right] &= \mat{0}, \\
        \E\left[\mat{W}^{(d)}_{r_{d-1},:,r_{d}} {\mat{W}^{(d)}_{r_{d-1},:,r_{d}}}\trans\right] &= \frac{1}{\sqrt{R_{d-1} R_{d}}} \mat{\Lambda}^{(d)},
    \end{align*}
    then $f_{\ff{TT}}(\mat{x})$ converges in distribution as $R_1\rightarrow\infty$, $R_2\rightarrow\infty$, \ldots, $R_{D-1}\rightarrow\infty$ to the Gaussian process 
    \begin{equation*}
        f_{\ff{TT}}(\mat{x}) \sim \mathcal{GP}\left(0, \prod_{d=1}^D {\mat{\varphi}^{(d)}(x_d)}\trans\mat{\Lambda}^{(d)} \mat{\varphi}^{(d)}(\cdot)\right).
    \end{equation*}
\end{theorem}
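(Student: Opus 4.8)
The plan is to recast $f_{\ff{TT}}(\mat{x})$ as a nested sequence of partial sums along the dimensions $d=1,\dots,D$ and to establish the \gls*{gp} limit by an inductive application of the \gls*{clt}, one dimension at a time, in the spirit of the wide--network--to--\gls*{gp} correspondence and in contrast to the single multivariate \gls*{clt} used for \cref{thm:CPD}. Collecting the entries $z^{(d)}_{r_{d-1},r_d}(x_d)$ into $\tempTTMat{d}(x_d)\in\mathbb{R}^{R_{d-1}\times R_d}$ we have $f_{\ff{TT}}(\mat{x}) = \tempTTMat{1}(x_1)\tempTTMat{2}(x_2)\cdots\tempTTMat{D}(x_D)$, a scalar since $R_0=R_D=1$. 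Writing $k^{(d)}(x_d,x_d')\isdef \mat{\varphi}^{(d)}(x_d)\trans\mat{\Lambda}^{(d)}\mat{\varphi}^{(d)}(x_d')$ for the $d$-th factor kernel, the coupled scaling $\tfrac{1}{\sqrt{R_{d-1}R_d}}$ of the fiber covariances is inconvenient, so first I would rescale the cores, setting $\mat{U}^{(d)}_{r_{d-1},:,r_d} \isdef (R_{d-1}R_d)^{1/4}\mat{W}^{(d)}_{r_{d-1},:,r_d}$ so that each rescaled fiber has the $R$-independent covariance $\mat{\Lambda}^{(d)}$, and correspondingly $\tilde z^{(d)}_{r_{d-1},r_d}(x_d)\isdef \mat{\varphi}^{(d)}(x_d)\trans\mat{U}^{(d)}_{r_{d-1},:,r_d}$ with $\E[\tilde z^{(d)}_{r_{d-1},r_d}(x_d)\,\tilde z^{(d)}_{r_{d-1},r_d}(x_d')] = k^{(d)}(x_d,x_d')$. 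A telescoping count of the scaling factors then gives $f_{\ff{TT}}(\mat{x}) = \big(\prod_{d=1}^{D-1}R_d^{-1/2}\big)\sum_{r_1,\dots,r_{D-1}}\prod_{d=1}^D \tilde z^{(d)}_{r_{d-1},r_d}(x_d)$, which is precisely the standard $1/\sqrt{\text{width}}$ normalization of a wide network.

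Next I would define the normalized partial sums $G^{(d)}_{r_d}(\mat{x})$ by $G^{(1)}_{r_1}(\mat{x})\isdef \tilde z^{(1)}_{1,r_1}(x_1)$ and $G^{(d)}_{r_d}(\mat{x}) \isdef R_{d-1}^{-1/2}\sum_{r_{d-1}=1}^{R_{d-1}} G^{(d-1)}_{r_{d-1}}(\mat{x})\,\tilde z^{(d)}_{r_{d-1},r_d}(x_d)$, so that unrolling the recursion yields $f_{\ff{TT}}(\mat{x}) = G^{(D)}_1(\mat{x})$. The inductive hypothesis I would carry along over $d$, as $R_1\to\infty,\dots,R_{d-1}\to\infty$ in turn, is: for any finite set of inputs the units $\{G^{(d)}_{r_d}\}_{r_d}$ converge in distribution to i.i.d. zero-mean Gaussians with covariance $\Sigma^{(d)}(\mat{x},\mat{x'})\isdef\prod_{d'=1}^{d}k^{(d')}(x_{d'},x_{d'}')$. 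The covariance bookkeeping is immediate: conditioning on dimensions $1,\dots,d-1$ (which fixes the $G^{(d-1)}_{r_{d-1}}$) and using that the fibers of core $d$ are independent across $r_{d-1}$ and of the earlier cores gives $\E[G^{(d)}_{r_d}(\mat{x})G^{(d)}_{r_d}(\mat{x'})] = R_{d-1}^{-1}\sum_{r_{d-1}}\E[G^{(d-1)}_{r_{d-1}}(\mat{x})G^{(d-1)}_{r_{d-1}}(\mat{x'})]\,k^{(d)}(x_d,x_d')$, which collapses to $\Sigma^{(d-1)}k^{(d)}=\Sigma^{(d)}$; the base case gives $\Sigma^{(1)}=k^{(1)}$, and at $d=D$ this reproduces exactly the claimed kernel $\prod_{d=1}^D k^{(d)}$.

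For the inductive step I would argue conditionally on the cores of dimensions $1,\dots,d-1$. Given these, the summands $G^{(d-1)}_{r_{d-1}}(\mat{x})\,\tilde z^{(d)}_{r_{d-1},r_d}(x_d)$ are independent across $r_{d-1}$ (distinct fibers) and mean zero, so a Lindeberg--Feller \gls*{clt} yields conditional joint Gaussianity of the vector $(G^{(d)}_{r_d}(\mat{x}^{(\ell)}))_\ell$ over any finite input collection; a law of large numbers, invoking the inductive hypothesis that the $G^{(d-1)}_{r_{d-1}}$ are asymptotically i.i.d., shows the random conditional covariance $R_{d-1}^{-1}\sum_{r_{d-1}}G^{(d-1)}_{r_{d-1}}(\mat{x})G^{(d-1)}_{r_{d-1}}(\mat{x'})\,k^{(d)}(x_d,x_d')$ converges in probability to the deterministic $\Sigma^{(d)}$. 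Because distinct units $r_d$ draw from independent fibers $\mat{U}^{(d)}_{:,:,r_d}$, they are conditionally independent, and the deterministic limiting covariance upgrades this to unconditional i.i.d. Gaussianity, closing the induction. Taking $d=D$ and reading off $f_{\ff{TT}}=G^{(D)}_1$ delivers the finite-dimensional Gaussian distributions with covariance $\prod_{d=1}^D k^{(d)}$ that define the stated \gls*{gp}.

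I expect the main obstacle to be the rigorous handling of the two intertwined limiting mechanisms inside each inductive step. Establishing the conditional \gls*{clt} requires verifying a Lindeberg condition for the \emph{non}-identically-distributed summands $G^{(d-1)}_{r_{d-1}}\tilde z^{(d)}_{r_{d-1},r_d}$, which in turn needs uniform moment control on the partial sums $G^{(d-1)}_{r_{d-1}}$ that is not guaranteed by the first two moments alone and may require an additional moment assumption on the fibers; simultaneously, the law-of-large-numbers step must be coupled correctly with the sequential order of the limits $R_1\to\infty,\dots,R_{D-1}\to\infty$, since the i.i.d. structure of the units at level $d-1$ only emerges once the earlier ranks have been sent to infinity. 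Making the interchange of these nested limits precise --- rather than the covariance computation, which is routine --- is the crux of the argument, and is exactly the technical content shared with rigorous proofs of the wide-network Gaussian-process limit.
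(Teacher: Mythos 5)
Your proposal follows essentially the same route as the paper: an induction over the \gls*{tt} dimensions in which the intermediate units $\hiddenTT{d}_{r_d}$ are shown to be i.i.d. \glspl*{gp} whose covariance obeys the identical product recursion $\Sigma^{(d)}=\Sigma^{(d-1)}k^{(d)}$, with a \gls*{clt} applied at each level; your rescaling of the cores merely makes the $1/\sqrt{R_{d-1}}$ normalization explicit. The technical obstacles you flag (Lindeberg--Feller for non-identically distributed summands, the law of large numbers for the random conditional covariance, and the interchange of nested limits) do not arise for the sequential-limit statement actually claimed in \cref{thm:TT}, since after sending $R_1,\ldots,R_{d-1}$ to infinity the previous level's units are exactly i.i.d. Gaussian and the plain i.i.d. \gls*{clt} suffices; those difficulties are precisely the content of the simultaneous-limit result, which the paper handles separately in \cref{cor:simultaneous} by mapping the model onto \citet[theorem 4]{matthews_gaussian_2018}.
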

\begin{proof}
    See \cref{proof:1}.
\end{proof}
\Cref{thm:TT} guarantees the convergence in distribution of $f_{\ff{TT}}(\mat{x})$ to the \gls*{gp} of \cref{eq:low_rank_GP} by taking successive limits of each \gls*{tt} rank. Importantly, the same convergence results also holds true if the \gls*{tt} ranks grow simultaneously, see \cref{appendix:thm_mathews}.

Both \cref{thm:CPD,thm:TT} are remarkable, as they imply that a degenerate \gls*{gp} which can be defined with an exponential number of $\prod_{d=1}^D M_d$ weights $\mat{w}$ can be also obtained with an \emph{infinite} number of model parameters $P$ using the \gls*{cpd}-constrained model of \cref{def:cpdkm} or the \gls*{tt}-constrained model of \cref{def:ttkm}. 
Further, \cref{thm:CPD,thm:TT} suggest that \gls*{cpd} and \gls*{tt}-based models exhibit \gls*{gp} behavior in the overparameterized regime.
\gls*{gp} behavior is characterized by a fixed learning representation, which in case of the kernel in \cref{thm:CPD,thm:TT} is fully defined by the \glspl*{bf} and is hence data-independent. 
On the contrary, in the finite rank regime, both \gls*{cpd} and \gls*{tt} models are able to craft nonlinear features from the provided \glspl*{bf}, potentially learning more latent patterns in the data.

\subsection{Convergence Rates to the \gls*{gp}}\label{subsec:convergence_rates}

While both \cref{thm:CPD} and \cref{thm:TT} guarantee convergence in distribution to the \gls*{gp} of \cref{eq:low_rank_GP}, they do so at rates that differ in terms of the number of model parameters. Let us assume, for simplicity, that the number of basis functions is the same along each dimension, i.e., $M$, and that the $D-1$ \gls*{tt} ranks equal $R$. 
It follows then that the number of \gls*{cpd} model parameters $P_{\ff{CPD}} = MDR_{\ff{CPD}}$ and the number of \gls*{tt} model parameters $P_{\ff{TT}} = M(D-2)R_{\ff{TT}}^2 + 2MR_{\ff{TT}} =\mathcal{O}(MDR_{\ff{TT}}^2)$.
Given the convergence rate of the \gls*{clt} for the expression in \cref{eq:f_CPD} to the \gls{gp} in \cref{eq:low_rank_GP}, denoted as $\mathcal{O}(\nicefrac{1}{\sqrt{R_{\ff{CPD}}}})$ with respect to the variable $P_{\ff{CPD}}$, we can establish the following corollary by substituting $R_{\ff{CPD}}$ as a function of $P_{\ff{CPD}}$.
\begin{corollary}[Convergence rate for \gls*{cpd}]\label{cor:cpd}
    Under the conditions of \cref{thm:CPD}, the function $f_{\ff{CPD}}(\mat{x})$ converges in distribution to the \gls*{gp} defined by \cref{eq:low_rank_GP}. The convergence rate is given by:
    \begin{equation*}
        f_{\ff{CPD}}(\mat{x}) \rightarrow \mathcal{O}\left(\left({\frac{MD}{P_{\ff{CPD}}}}\right)^\frac{1}{2}\right).
    \end{equation*}
\end{corollary}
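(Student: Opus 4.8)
The plan is to treat the corollary as a reparametrization of the Berry--Esseen rate implicit in the \gls*{clt} underlying \cref{thm:CPD}. The starting point is that $f_{\ff{CPD}}(\mat{x}) = \sum_{r=1}^R \hiddenCP_r(\mat{x})$ is a sum of $R$ i.i.d.\ scalar summands (\cref{eq:addend_CPD}), each a product of $D$ independent zero-mean factors $\mat{\varphi}^{(d)}(x_d)\trans \mat{w}^{(d)}_{:,r}$. First I would isolate the $R$-dependence of the prior by writing $\mat{w}^{(d)}_{:,r} = R^{-\nicefrac{1}{2D}}\mat{u}^{(d)}_{:,r}$, where $\mat{u}^{(d)}_{:,r}$ has covariance $\mat{\Lambda}^{(d)}$ independent of $R$. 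This factors out the normalization as $\hiddenCP_r(\mat{x}) = R^{-\nicefrac{1}{2}} g_r(\mat{x})$ with $g_r(\mat{x}) = \prod_{d=1}^D \mat{\varphi}^{(d)}(x_d)\trans \mat{u}^{(d)}_{:,r}$, so that $f_{\ff{CPD}}(\mat{x}) = R^{-\nicefrac{1}{2}}\sum_{r=1}^R g_r(\mat{x})$ is exactly the standardized sum of the classical \gls*{clt}: the $g_r(\mat{x})$ are i.i.d., zero-mean, and of fixed variance $\prod_{d=1}^D \mat{\varphi}^{(d)}(x_d)\trans \mat{\Lambda}^{(d)} \mat{\varphi}^{(d)}(x_d)$, matching the kernel of \cref{thm:CPD}.

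Second, I would invoke Berry--Esseen to quantify the rate. For a single marginal at fixed $\mat{x}$, provided $\E|g_r(\mat{x})|^3 < \infty$ --- which follows from the assumed finite third moments of the core columns together with H\"older's inequality applied across the $D$ independent factors --- the Kolmogorov distance between the law of $f_{\ff{CPD}}(\mat{x})$ and the limiting Gaussian is bounded by a constant multiple of $\E|g_r(\mat{x})|^3 / (\sigma^3 \sqrt{R_{\ff{CPD}}})$, giving the rate \order{\nicefrac{1}{\sqrt{R_{\ff{CPD}}}}}. To match the finite-dimensional-distribution statement of \cref{thm:CPD} rather than a single marginal, I would instead appeal to a multivariate Berry--Esseen bound (e.g.\ Bentkus), which preserves the same $\sqrt{R_{\ff{CPD}}}$ scaling up to a dimension-dependent constant absorbed into the \order{\cdot}.

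Finally, the corollary follows by substitution. Under the stated simplifying assumption ($M_d = M$ for all $d$), \cref{def:CPD} gives $P_{\ff{CPD}} = MDR_{\ff{CPD}}$, hence $R_{\ff{CPD}} = \nicefrac{P_{\ff{CPD}}}{MD}$. Inserting this into \order{\nicefrac{1}{\sqrt{R_{\ff{CPD}}}}} yields
\begin{equation*}
    \mathcal{O}\!\left(\frac{1}{\sqrt{R_{\ff{CPD}}}}\right) = \mathcal{O}\!\left(\left(\frac{MD}{P_{\ff{CPD}}}\right)^{\nicefrac{1}{2}}\right),
\end{equation*}
which is the claimed rate.

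The main obstacle is not the substitution, which is elementary, but rigorously justifying the \order{\nicefrac{1}{\sqrt{R_{\ff{CPD}}}}} \gls*{clt} rate itself: one must verify the finite-third-moment condition for the product summands $g_r(\mat{x})$ and, if the full finite-dimensional (\gls*{gp}) convergence is required rather than a pointwise marginal, control the dimension dependence of the multivariate Berry--Esseen constant. The statement of the corollary takes this rate as given, so the remaining content is genuinely the reparametrization from $R_{\ff{CPD}}$ to $P_{\ff{CPD}}$.
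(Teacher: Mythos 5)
Your proposal is correct and follows essentially the same route as the paper, whose entire derivation of this corollary is the substitution $R_{\ff{CPD}} = \nicefrac{P_{\ff{CPD}}}{MD}$ into the asserted \gls*{clt} rate $\mathcal{O}(\nicefrac{1}{\sqrt{R_{\ff{CPD}}}})$. Your Berry--Esseen justification of that rate is actually more than the paper supplies, and you are right to flag that it requires a finite-third-moment condition on the core columns beyond the two moments assumed in \cref{thm:CPD}.
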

Due to their hierarchical structure, \gls*{tt} models are a composition of $R_{\ff{TT}}^{D-1}$ variables, but can be represented in a quadratic number of model parameters in $R_{\ff{TT}}$, since $P_{\ff{TT}}=\mathcal{O}(MDR_{\ff{TT}}^2)$. Expressing then the \gls*{clt} convergence rate of $\mathcal{O}(\nicefrac{1}{\sqrt{R_{\ff{TT}}}^{D-1}})$ as a function of $P_{\ff{TT}}$ yields the following corollary.
\begin{corollary}[Convergence rate for \gls*{tt}]\label{cor:tt}
    Under the conditions of \cref{thm:TT}, the function $f_{\ff{TT}}(\mat{x})$ converges in distribution to the \gls*{gp} defined by \cref{eq:low_rank_GP}. The convergence rate is given by:
\begin{equation*}
    f_{\ff{TT}}(\mat{x}) \rightarrow \mathcal{O}\left(\left({\frac{MD}{P_{\ff{TT}}}}\right)^\frac{D-1}{4}\right).
\end{equation*}
\end{corollary}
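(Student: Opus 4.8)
The plan is to reduce the corollary to a routine substitution once the \gls*{clt} rate is expressed in the natural scaling variable of the \gls*{tt} model, namely its internal rank $R_{\ff{TT}}$. First I would count the effective number of summands in the \gls*{tt}-constrained kernel machine of \cref{def:ttkm}: under the simplifying assumptions $M_d = M$ and $R_1 = \cdots = R_{D-1} = R_{\ff{TT}}$, the output $f_{\ff{TT}}(\mat{x})$ is a sum over the multi-index $(r_1,\dots,r_{D-1})$ of the products $\prod_{d=1}^D z^{(d)}_{r_{d-1},r_d}(x_d)$, so it aggregates $\prod_{d=1}^{D-1} R_d = R_{\ff{TT}}^{D-1}$ terms. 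The quantitative version of the \gls*{clt} underlying \cref{thm:TT} then controls the distance to the limiting \gls*{gp} at rate \order{1/\sqrt{R_{\ff{TT}}^{D-1}}}, i.e. \order{R_{\ff{TT}}^{-(D-1)/2}}.

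Second, I would translate this rate from the rank $R_{\ff{TT}}$ to the parameter budget $P_{\ff{TT}}$. From \cref{def:TT} and the stated assumptions the parameter count satisfies $P_{\ff{TT}} = M(D-2) R_{\ff{TT}}^2 + 2 M R_{\ff{TT}}$, i.e. \order{MD R_{\ff{TT}}^2}, so that inverting this relation gives $R_{\ff{TT}} = \Theta\bigl((P_{\ff{TT}}/(MD))^{1/2}\bigr)$. Substituting into the rank-based rate yields
\begin{equation*}
    R_{\ff{TT}}^{-(D-1)/2} = \Theta\!\left(\left(\frac{MD}{P_{\ff{TT}}}\right)^{\frac{D-1}{4}}\right),
\end{equation*}
which is exactly the claimed \order{(MD/P_{\ff{TT}})^{(D-1)/4}}. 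The same template specialized to a flat sum of $R_{\ff{CPD}}$ terms with $P_{\ff{CPD}} = MD R_{\ff{CPD}}$ recovers \cref{cor:cpd}, providing a consistency check on the argument.

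The main obstacle is justifying the $R_{\ff{TT}}^{D-1}$ count inside a genuinely quantitative \gls*{clt}, since --- unlike the \gls*{cpd} case, where the $R_{\ff{CPD}}$ summands are i.i.d. by construction --- the $R_{\ff{TT}}^{D-1}$ summands of the \gls*{tt} model are \emph{not} independent: distinct multi-indices reuse the same core fibers whenever they share an index pair $(r_{d-1},r_d)$, and there are only $\sum_d R_{d-1}R_d$ genuinely independent fibers. I would handle this by leaning on the successive-limit structure already used to prove \cref{thm:TT}: at each dimension one conditions on the cores accumulated so far and applies a Berry--Esseen bound to the inner sum of $R_d$ conditionally-i.i.d. contributions, so that the errors compose multiplicatively across the $D-1$ internal ranks and produce the $\prod_d R_d^{-1/2} = R_{\ff{TT}}^{-(D-1)/2}$ factor. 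The delicate point is controlling the third-moment (Lyapunov) terms of the conditional increments uniformly as the outer ranks grow, which is where the finite-moment hypotheses on the core fibers in \cref{thm:TT} are essential; I would verify that these remain \order{1} so that the composed rate does not accrue extra rank-dependent constants.
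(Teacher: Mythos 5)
Your first two paragraphs reproduce the paper's own argument exactly: the paper "proves" this corollary only by the informal remark that the \gls*{tt} model is a composition of $R_{\ff{TT}}^{D-1}$ variables, that the \gls*{clt} rate is therefore $\mathcal{O}(R_{\ff{TT}}^{-(D-1)/2})$, and that substituting $R_{\ff{TT}}=\Theta\bigl((P_{\ff{TT}}/(MD))^{1/2}\bigr)$ from $P_{\ff{TT}}=\mathcal{O}(MDR_{\ff{TT}}^{2})$ gives the stated exponent $(D-1)/4$. So on the part the paper actually carries out, your proposal matches it, and you are right that the entire content of the corollary is this change of variables.

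Where your proposal goes beyond the paper is also where it breaks. You correctly identify that the $R_{\ff{TT}}^{D-1}$ summands are dependent and that the rate $\mathcal{O}(R_{\ff{TT}}^{-(D-1)/2})$ is the step needing justification, but the proposed repair --- conditioning layer by layer and letting Berry--Esseen errors ``compose multiplicatively'' --- is not how such errors behave. In the successive-limit structure of \cref{thm:TT}, the deviation from Gaussianity accumulates \emph{additively} across layers (a triangle-inequality argument in, say, the Kolmogorov or Wasserstein metric), giving at best $\mathcal{O}\bigl(\sum_{d} R_d^{-1/2}\bigr)$. Indeed, even if $\hiddenTT{D-1}$ were exactly Gaussian, the final output is a sum of only $R_{D-1}$ conditionally i.i.d.\ products of (near-)Gaussians, which are themselves non-Gaussian, so the last layer alone contributes a Berry--Esseen term of order $R_{D-1}^{-1/2}$ (or $R_{D-1}^{-1}$ if one exploits the symmetry of the summands); this floor cannot be lowered by widening the earlier cores, so no layerwise conditioning scheme can produce $R_{\ff{TT}}^{-(D-1)/2}$. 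The paper sidesteps this by treating the $R_{\ff{TT}}^{D-1}$ count purely heuristically and never states in which metric the rate is meant; your attempt makes the gap explicit but does not close it, and as written the multiplicative-composition step would fail.
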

Therefore, when dealing with identical models in terms of the number of basis functions ($M$), dimensionality of the inputs ($D$), and the number of model parameters ($P_{\ff{CPD}}=P_{\ff{TT}}$), $f_{\ff{TT}}(\mat{x})$ will converge at a polynomially faster rate than $f_{\ff{CPD}}(\mat{x})$, thus exhibiting \gls*{gp} behavior with a reduced number of model parameters.
In particular, based on \cref{cor:cpd,cor:tt} we expect the \gls*{gp} convergence rate of \gls*{tt} models to be faster for $D\geq3$.

These insights are relevant for practitioners engaged with \gls*{tn}-constrained kernel machines, as they shed light on the balance between the \gls*{gp} and (deep) neural network behavior inherent in these models. Notably, \gls*{cpd} and \gls*{tt}-constrained models, akin to shallow and \glspl*{dnn} respectively, have the capacity to craft additional nonlinearities beyond the provided basis functions.
This characteristic can result in superior generalization when dealing with a limited number of parameters. However, as the parameter count increases, we expect these models to transition towards \gls*{gp} behavior, characterized by a fixed feature representation and static in comparison.

\subsection{Consequences for \gls*{map} Estimation}
As discussed in \cref{sub:tnkm}, \gls*{tn}-constrained kernel machines are typically trained in the \gls*{ml} or \gls*{map} framework by constraining the weights $\mat{w}$ in the log-likelihood or log-posterior to be a \gls*{tn}. 
In said \gls*{map} context, and e.g. when specifying a normal prior on the model weights $\mat{w}\sim \mathcal{N}(\mat{0},\mat{\Lambda})$, the resulting regularization term $\Omega$ is approximated by  $\Omega_{\ff{TN}}$ as
\begin{equation*}
\Omega \isdef\norm{\mat{\Lambda}^{-\frac{1}{2}}\mat{w}}^2 \approx \Omega_{\ff{TN}} \isdef \norm{\TN{\tensorize{\mat{\Lambda}^{-\frac{1}{2}} \mat{w}}}}^2,
\end{equation*}
where $\mat{\Lambda} = \kron_{d=1}^D \mat{\Lambda}^{(d)}$.
For example, in case of \gls*{cpd}-constrained models we have
\begin{equation}\label{eq:reg_full_CPD}
    \Omega_{\ff{CPD}} =  \norm{\odot_{d=1}^D \left({\mat{W}^{(d)}}\trans {\mat{\Lambda}^{(d)}}^{-1} \mat{W}^{(d)}\right)}^2.
\end{equation}
This form of regularization is considered for \gls*{tt} by \citet{wahls_learning_2014,novikov_exponential_2018,chen_parallelized_2018} and for \gls*{cpd} by \citet{wesel_large-scale_2021,wesel_tensor-based_2023}. It provides a Frobenius norm approximation of the regularization term which recovers the original \gls*{map} estimate as the hyperparameters of $\TN{\tensorize{\Lambda \mat{w}}}$ are chosen such that $\TN{\tensorize{\mat{\Lambda}\mat{w}}}=\tensorize{\mat{\Lambda w}}$.
If we now consider the regularization term in the log-posterior of \cref{thm:CPD} we end up with
\begin{equation}\label{eq:reg_new_CPD}
     \Omega_{\ff{CPD}} \isdef R^{\frac{1}{D}} \sum_{d=1}^D \norm{{\mat{{\Lambda}}^{(d)}}^{-\frac{1}{2}} \mat{W}^{(d)}}^2.
\end{equation}
This regularization has been applied without the scaling factor $ R^{\frac{1}{D}}$ and with $\mat{\Lambda}^{(d)} = \eye{M_d}$, as observed in the work of \citet{kargas_supervised_2021}, who may not have been aware of the underlying connection at that time.
It does not account for the model interactions across the dimensionality due to the i.d.d. assumptions on the cores.
Contrary to the regularization $\Omega_{\ff{TN}}$ in \cref{eq:reg_full_CPD}, it provides an approximation which recovers the log-prior $\Omega$ and thus the \gls*{map} \emph{in the limit}. 
These considerations point to the fact that if the practitioner is interested only in a \gls*{map} estimate which gives weight to the full prior $\mat{w}\sim \mathcal{N}(\mat{0},\mat{\Lambda})$ as well as possible, he might be more interested in the regularization of \cref{eq:reg_full_CPD}.
Furthermore, the priors in \cref{thm:CPD,thm:TT} provide a sensible initial guess for gradient-based optimization which is invariant w.r.t. the dimensionality of the inputs and the choice of rank hyperparameters. We hence directly address the initialization issues affecting \gls*{tn}-constrained kernel machines \citep{barratt_improvements_2021} by providing a sensible initialization strategy and regularization which does not suffer from vanishing or exploding gradients.

\section{Numerical Experiments}\label{sec:experiments}
\begin{figure*}[t]
    \centering
    \subfigure{\includegraphics[width=0.25\linewidth]{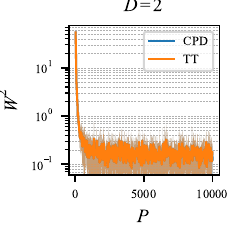}\label{sub:vm1}}\hfill
    \subfigure{\includegraphics[width=0.25\linewidth]{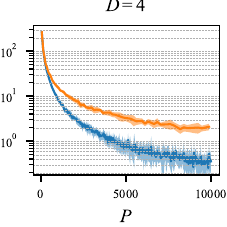}\label{sub:vm2}}\hfill
    \subfigure{\includegraphics[width=0.25\linewidth]{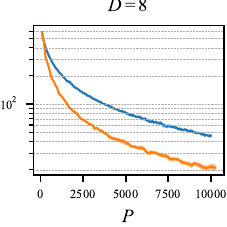}\label{sub:vm3}}\hfill
    \subfigure{\includegraphics[width=0.25\linewidth]{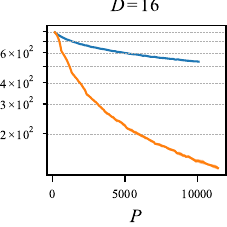}\label{sub:vm4}}\hfill
    \caption{Mean and standard deviation of the Cramér–von Mises statistic $W^2$ evaluated between the empirical \gls*{cdf} of \gls*{cpd} and \gls*{tt} models specified in \cref{thm:CPD,thm:TT} evaluated at $N=10$ random points as a function of model parameters $P$ for $D=2,4,8,16$.
    The two models are equivalent for $D=2$. Notice how \gls*{tt} converges faster to the \gls*{gp} as the dimensionality of the inputs $D$ increases.}
    \label{fig:cramer_von_mises}
\end{figure*}
We setup two numerical experiments in order to respectively empirically observe the claims in \cref{thm:CPD,thm:TT} by evaluating the convergence to the prior \gls*{gp} in \cref{eq:low_rank_GP}, and to evaluate the \gls*{gp} behavior of such models at prediction in the finite rank case.
In all experiments we made use of the Hilbert-GP \cite{solin_hilbert_2020} to provide a \gls*{bf} expansion for the Gaussian kernel and opt for $M=10$ basis functions per dimension. 
When sampling the cores of the models in \cref{thm:CPD,thm:TT} we resorted to normally distributed random variables. We implemented both models in PyMC \citep{abril-pla_pymc_2023}. The fully anonymized Python implementation is available at \url{github.com/fwesel/tensorGP}.

\subsection{\gls*{gp} Convergence}
In order to empirically verify the convergence to the \gls*{gp} of \cref{eq:low_rank_GP} we sample $\num{10000}$ instances of the \gls*{cpd} and \gls*{tt} models specified in \cref{thm:CPD,thm:TT} for increasing \gls*{cpd} and \gls*{tt} ranks yielding up to $P=\num{10000}$ model parameters.
Since the target distribution is Gaussian with known moments, we record the Cramér–von Mises statistic $W^2$ \citep{dagostino_goodness--fit_1986} which gives a metric of closeness between the target and our sampled empirical \gls*{cdf}.
We repeat this for $N=10$ randomly sampled data points and for $D=2,4,8,16$ and report the mean and standard deviation of the results in \cref{fig:cramer_von_mises}. 
Therein it can be observed that for the same number of model parameters, \gls*{tt} converges more rapidly than \gls*{cpd} as the dimensionality of the inputs grows.
Both approaches however need exponentially more parameters to converge at the same rate for increasing dimensionality of the inputs.
Note that for $D=4$ \gls*{cpd}, contrary to what stated in \cref{subsec:convergence_rates} still converges faster due to the approximation made when considering $P_{\ff{TT}}=DMR^2$.
Histograms of the empirical \gls*{cdf} for one datapoint are shown in \cref{fig:histograms}.
This behavior stems from the fact that for a fixed combination of $D$, $M$ and $P$, \gls*{tt} captures an exponential $R^{D-1}$ range of model interactions in contrast to the $R$ linear interactions exhibited by \gls*{cpd}. This fact renders the choice of \gls*{tt} more suitable with respect to \gls*{cpd} if one wishes the model to exhibit more \gls*{gp} behavior, which is characterized by a weight-independent feature representation and is less likely to overfit.

\subsection{\gls*{gp} Behavior at Prediction}
To investigate whether \gls*{cpd} and \gls*{tt}-constrained kernel machines exhibit \gls*{gp} behavior as the number of parameters increases we tackle two small \emph{UCI} regression problems, \emph{yacht} and \emph{energy} \citep{dua_uci_2017}.
We consider $\SI{70}{\percent}$ of the data for training an the remaining for test and model our observations as having i.i.d. Gaussian likelihood (\cref{eq:likelihood}).
We then consider the \gls*{gp} in \cref{thm:CPD,thm:TT} which we train by maximizing the marginalized likelihood (\num{10} random initializations). In order to compare models with the target \gls*{gp}, we fix the obtained hyperparameters $\{\sigma,\mat{\Lambda}\}$ and sample $\num{4}$ chains of $\num{2000}$ instances from the posteriors $p(\CPD{\tensorize{\mat{w}}}\mid \mat{y})$ and $p(\TT{\tensorize{\mat{w}}}\mid \mat{y})$ for a range of model parameters $P$ using the \emph{No U-Turn Sampling} Hamiltonian Monte Carlo scheme with default parameters. 
After discarding the first $\num{1000}$ samples (burn-in), we obtain posterior predictive distributions $p(f_{\ff{CPD}}(\mat{x})\mid\mat{y})$ and $p(f_{\ff{TT}}(\mat{x})\mid\mat{y})$.
We then compare the two models in terms of the \gls*{rmse} of the posterior mean on the test data. We plot the mean and standard deviation of the \gls*{rmse} over the chains in \cref{fig:posterior_mean}.

In \cref{fig:posterior_mean} one can observe that the prediction of both \gls*{cpd} and \gls*{tt} models tends towards the \gls{gp} as the number of model parameters increases. 
In case of the \emph{yacht} dataset this happens with smaller errors, i.e. both models generalize better.
As expected, the \gls*{tt}-constrained kernel machine exhibits more \gls*{gp} behavior and in this case worse generalization.
In case of the \emph{energy} dataset both methods have very similar performance compared to the \gls*{gp} and converge with larger test errors as soon as their ranks are different than one.
While the behavior on both datasets can be explained in term of the \gls*{gp} underfitting on the \emph{yacht} dataset, it is worth noting that both \gls*{tt} and \gls*{cpd}-based models perform at least as well as the \gls*{gp} for ranks higher than one, rendering them computationally advantageous alternatives to the \gls*{gp} in this context.

\begin{figure*}
    \centering
    \subfigure{\includegraphics[width=0.5\linewidth]{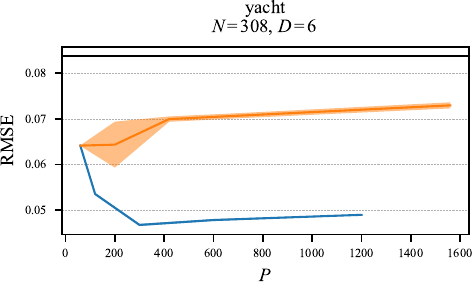}\label{sub:yacht}}\hfill
    \subfigure{\includegraphics[width=0.5\linewidth]{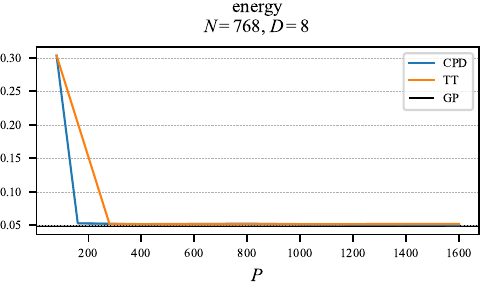}\label{sub:energy}}\hfill
    \caption{Mean and standard deviation of the test \gls*{rmse} evaluated of \gls*{cpd} and \gls*{tt} models as a function of model parameters $P$ as well as their target \gls*{gp}.
    Notice how in the \emph{yacht} dataset \gls*{tt} exhibits more \gls*{gp} behavior compared to \gls*{cpd} as $P$ increases. On the \emph{energy} dataset both methods exhibit \gls*{gp} behavior already for ranks different than one, which explains why \gls*{tt} appears to be slower.}
    \label{fig:posterior_mean}
\end{figure*}

\section{Related Work}\label{sec:related_work}

Our contribution is closely tied to the links between Bayesian neural networks and \glspl*{gp}, first established for single-layer single-output neural networks \citep{neal_bayesian_1996,neal_priors_1996} having sigmoidal \citep{williams_computing_1996}, Gaussian \citep{williams_computing_1997} and rectified linear unit \citep{cho_kernel_2009} as activation function.

This idea was extended to \glspl*{dnn} by \citet{lee_deep_2018} and \citet{matthews_gaussian_2018} for the all the most common activation functions. It is important to note that the derivation of \citet{lee_deep_2018} makes use of recursive application of the \gls*{clt} to each layer of the network, while \citet{matthews_gaussian_2018} considers the case where the width of each layer grows simultaneously to the others, which is arguably more valuable in practice.

Further extensions have been proposed to \glspl*{cnn} where the number of channels tends to infinity \citep{novak_bayesian_2018,garriga-alonso_deep_2018}, to \glspl*{rnn} \citep{sun_recurrent_2022} and to \glspl*{dnn} having low-rank constraints on the weight matrices \citep{nait-saada_beyond_2023}.

In particular \cref{thm:CPD} resembles the results of \citet{neal_bayesian_1996,neal_priors_1996,williams_computing_1997} which relate infinite-width single layer neural networks to \glspl*{gp}. The \gls*{cpd} rank corresponds exactly to the width of the neural network. The crucial difference lies however in the product structure, which is not present in neural networks and introduces a nonlinearity of different kind than the one induced by the activation function, which is linear in the \gls*{cpd} and more in general for any \gls*{tn}. \glspl*{tt} on the other hand resemble \glspl*{dnn} as they map the output of each core to the next one. However, in contrast to \glspl*{dnn}, the inputs are processed over the depth of the network. For a more in depth discussion we refer the reader to \citep{cohen_expressive_2016}.

Likewise \cref{thm:TT} is the \gls*{tn} counterpart to the works of \citet{lee_deep_2018,matthews_gaussian_2018} which relate finite depth neural networks to \glspl*{gp}. The \gls*{tt} ranks are then equivalent to the width of each layer and the activation function is linear.
In contrast to the \glspl*{dnn} case, the induced \glspl*{gp} are degenerate.

\section{Conclusion}\label{sec:conclusion}
In this paper we proved that \gls*{cpd} and \gls*{tt}-constrained kernel machines are \glspl*{gp} in the limit or large \gls*{tn} ranks.
We characterized the target \gls*{gp}, analyzed the convergence behavior of both models and showed that compared to \gls*{cpd}, \gls*{tt}-based models converge faster to the \gls*{gp} when dealing with higher-dimensional inputs. We empirically demonstrated these properties by means of numerical experiments. 

While the \gls*{gp} convergence is too slow to warrant these models as \gls*{gp} prior approximations, the insights we derived are useful in practice as they shed light on the effects of the choice of \gls*{tn} and regularization in these models.

\bibliography{Bibliography}
\bibliographystyle{abbrvnat}

\newpage
\appendix
\onecolumn
\section{Notation}\label{sec:notation}
Throughout this paper we denote scalars in both capital and non-capital italics $w,W$, vectors in non-capital bold $\mat{w}$, matrices in capital bold $\mat{W}$ and tensors in capital italic bold font $\ten{W}$. The $m$-th entry of a vector $\mat{w}\in\mathbb{R}^{M}$ is indicated as $w_m$ and the $m_1 m_2\ldots m_D$-th entry of a $D$-dimensional tensor $\ten{W}\in \mathbb{R}^{M_1 \times M_2 \times \cdots \times M_D}$ as $w_{m_1,m_2,\ldots m_D}$. 
We employ the column notation to indicate a set of elements of tensor given a set of indices, e.g. $\mat{W}_{m_1,:,m_2}$ and $\mat{W}_{m_1,1:3,m_2}$ represent respectively all elements and the first three elements along the second dimension of tensor $\ten{W}$ with fixed indices $m_1$ and $m_2$.
The Kronecker product is denoted by $\kron$ and the Hadamard (elementwise) by $\odot$. We employ one-based indexing for all tensors.
The Frobenius inner product between two $D$-dimensional tensors $\ten{V},\ten{W}\in\mathbb{R}^{M_1 \times M_2 \times  \cdots \times M_D}$ is
\begin{equation*}\label{eq:inner}
    \innerfrob{\ten{V}}{\ten{W}} \isdef \sum_{m_1=1}^{M_1}\sum_{m_2=1}^{M_2}\cdots \sum_{m_D=1}^{M_D} v_{m_1, m_2, \ldots, m_D} w_{m_1, m_2, \ldots, m_D},
\end{equation*}
and the Frobenius norm of $\ten{W}\in\mathbb{R}^{M_1 \times M_2 \times  \cdots \times M_D}$ is denoted and defined as
\begin{equation*}
    \norm{\ten{W}}^2 \isdef \innerfrob{\ten{W}}{\ten{W}}.
\end{equation*}
We define the vectorization operator as $\vectorize{\cdot}:\mathbb{R}^{M_1\times M_2 \times \cdots \times M_D} \rightarrow \mathbb{R}^{M_1 M_2 \cdots M_D}$ such that
\begin{equation*}
    \vectorize{\ten{W}}_{m} = w_{m_1,m_2,\ldots, m_D},
\end{equation*}
 with $m = m_1 + \sum_{d=2}^{D} (m_d-1) \prod_{k=1}^{d-1} M_k$. Likewise, its inverse, the tensorization operator $\tensorize{\cdot}: \mathbb{R}^{M_1 M_2 \cdots M_D}\rightarrow \mathbb{C}^{M_1\times M_2 \times \ldots M_D}$ is defined such that
\begin{equation*}
    \tensorize{\mat{w}}_{m_1, m_2, \cdots, m_D} = w_m.
\end{equation*}

\section{Proofs}
\subsection{\gls*{gp} of \gls*{cpd}-Constrained Kernel Machine}\label{appendix:cpd}
\begin{theorem}[\gls*{gp} limit of \gls*{cpd}-constrained kernel machine]
    Consider the \gls*{cpd}-constrained kernel machine
    \begin{equation*}
        f_{\ff{CPD}}(\mat{x}) \isdef \innerfrob{\RANKONE{\tensorize{\mat{\varphi}(\mat{x})}}}{\CPD{\tensorize{\mat{w}}}}.
    \end{equation*}
    If each of the $R$ columns of each \gls*{cpd} core ${\mat{w}^{(d)}}_{:,r}\in\mathbb{R}^{M_d}$ is an i.i.d. random variable such that
    \begin{align*}\label{eq:CPD_prior}
        \E \left[\mat{w}^{(d)}_{:,r}\right] &= 0, \\
        \E\left[\mat{w}^{(d)}_{:,r} {\mat{w}^{(d)}_{:,r}}\trans\right] &= \frac{1}{R^{\frac{1}{D}}} \mat{\Lambda}^{(d)},
    \end{align*}
    then $f_{\ff{CPD}}(\mat{x})$ converges in distribution as $R\rightarrow\infty$ to the Gaussian process 
    \begin{equation*}
        f_{\ff{CPD}}(\mat{x}) \sim \mathcal{GP}\left(0, \prod_{d=1}^D {\mat{\varphi}^{(d)}(x_d)}\trans\mat{\Lambda}^{(d)} \mat{\varphi}^{(d)}(\cdot)\right).
    \end{equation*}
\end{theorem}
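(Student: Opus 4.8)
The plan is to recognize $f_{\ff{CPD}}(\mat{x}) = \sum_{r=1}^R \hiddenCP_r(\mat{x})$ as a sum of $R$ independent and identically distributed summands and to invoke the (multivariate) \gls*{clt}. By \cref{def:cpdkm} we have $\hiddenCP_r(\mat{x}) = \prod_{d=1}^D {\mat{\varphi}^{(d)}(x_d)}\trans {\mat{w}^{(d)}}_{:,r}$, and since the columns ${\mat{w}^{(d)}}_{:,r}$ are i.i.d. across the index $r$, the random variables $\{\hiddenCP_r\}_{r=1}^R$ are i.i.d. as well. The only subtlety is that their common law depends on $R$ through the prior covariance $R^{-\frac{1}{D}}\mat{\Lambda}^{(d)}$. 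I would remove this dependence by writing ${\mat{w}^{(d)}}_{:,r} = R^{-\frac{1}{2D}}\tilde{\mat{w}}^{(d)}_{:,r}$, where $\tilde{\mat{w}}^{(d)}_{:,r}$ has the $R$-independent covariance $\mat{\Lambda}^{(d)}$. Because $\hiddenCP_r$ is a product of $D$ factors, each linear in one column, this yields $\hiddenCP_r = R^{-\frac{1}{2}}\tilde{\hiddenCP}_r$, so that $f_{\ff{CPD}}(\mat{x}) = R^{-\frac{1}{2}}\sum_{r=1}^R \tilde{\hiddenCP}_r(\mat{x})$ is precisely the \gls*{clt}-normalized sum of i.i.d. summands whose law no longer depends on $R$.

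Next I would compute the first two moments of a single rescaled summand. Independence of the columns across the dimension index $d$ makes the expectation factorize, so $\E[\tilde{\hiddenCP}_r(\mat{x})] = \prod_{d=1}^D {\mat{\varphi}^{(d)}(x_d)}\trans \E[\tilde{\mat{w}}^{(d)}_{:,r}] = 0$. The same factorization yields the covariance between two evaluation points,
\[
\E\left[\tilde{\hiddenCP}_r(\mat{x})\,\tilde{\hiddenCP}_r(\mat{x'})\right] = \prod_{d=1}^D {\mat{\varphi}^{(d)}(x_d)}\trans \mat{\Lambda}^{(d)} \mat{\varphi}^{(d)}(x_d'),
\]
which is exactly the target product kernel of \cref{lemma:product_kernels}; here the $R^{-\frac{1}{D}}$ raised across the $D$ factors produces an $R^{-1}$ that is cancelled by the $R$ summands, so that the limiting covariance is independent of $R$. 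Each individual summand is a product and is therefore non-Gaussian, and this product nonlinearity is what distinguishes the \gls*{cpd} hidden units from the activated linear units of a single-layer neural network; nonetheless the \gls*{clt} restores Gaussianity in the infinite-rank limit.

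To obtain convergence of the process rather than of a single marginal, I would fix an arbitrary finite set of inputs $\mat{x}_1,\ldots,\mat{x}_N$ and apply the Cramér--Wold device: for any $\mat{a}\in\mathbb{R}^N$ the scalar $\sum_{n=1}^N a_n f_{\ff{CPD}}(\mat{x}_n) = R^{-\frac{1}{2}}\sum_{r=1}^R \left(\sum_{n=1}^N a_n \tilde{\hiddenCP}_r(\mat{x}_n)\right)$ is a normalized sum of i.i.d. zero-mean scalars, to which the classical \gls*{clt} applies once a finite-variance (Lyapunov) condition is verified. Convergence of every such linear combination gives joint convergence of $\left(f_{\ff{CPD}}(\mat{x}_1),\ldots,f_{\ff{CPD}}(\mat{x}_N)\right)$ to a centred multivariate Gaussian whose covariance is the product kernel evaluated at those points; since these finite-dimensional Gaussian limits are consistent across choices of inputs, they define the claimed \gls*{gp}.

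I expect the main obstacle to be the moment bookkeeping required to legitimately apply the \gls*{clt}: one must confirm that each $\tilde{\hiddenCP}_r$, being a $D$-fold product, still has a finite second moment, and a finite $(2+\delta)$-moment for the Lyapunov condition. This reduces to controlling $\E\big[\prod_{d=1}^D ({\mat{\varphi}^{(d)}(x_d)}\trans \tilde{\mat{w}}^{(d)}_{:,r})^2\big]$, which factorizes across $d$ by independence and is finite because each column carries the finite covariance $\mat{\Lambda}^{(d)}$. The remaining care is purely in tracking the independence across both $d$ and $r$, so that all the factorizations above are valid and the summands are genuinely i.i.d.
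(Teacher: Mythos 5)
Your proposal is correct and follows essentially the same route as the paper's proof: decompose $f_{\ff{CPD}}(\mat{x})=\sum_{r=1}^R \hiddenCP_r(\mat{x})$ into i.i.d.\ summands, factorize the first two moments across $d$ by independence so the $R^{-\frac{1}{D}}$ scalings compound to the $R^{-1}$ needed for the \gls*{clt}, and pass to joint normality of finite-dimensional marginals. You are in fact somewhat more careful than the paper, which glosses over the $R$-dependence of the summands' law (your explicit rescaling to $R^{-\frac{1}{2}}\tilde{\hiddenCP}_r$) and the Cramér--Wold step; note only that once the summands are genuinely i.i.d.\ with $R$-independent law, finite variance alone suffices and no Lyapunov $(2+\delta)$-moment condition is needed.
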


\begin{proof}\label{proof:cpd}
Consider the $R$ intermediate functions $\hiddenCP_r$ of \cref{eq:addend_CPD} which constitute the \gls*{cpd}-constrained model of \cref{eq:f_CPD}. Due to the i.i.d. assumption on ${\mat{w}^{(d)}}_{:,r}$ each addend is the same function of i.i.d. random variables and thus is itself i.i.d.. The mean of each addend is
\begin{equation}
    \E\left[\hiddenCP_r(\mat{x})\right] = \E\left[\prod_{d=1}^D {\mat{\varphi}^{(d)}(x_d)}\trans{\mat{w}^{(d)}}_{:,r}\right] = 0,
\end{equation}
due to the i.i.d assumption and the linearity of expectation.
Its covariance is 
\begin{subequations}
    \begin{align}
        &\E\left[\hiddenCP_r(\mat{x}) \hiddenCP_r(\mat{x'})\right] \\
        =& \E\left[\prod_{d=1}^D {\mat{\varphi}^{(d)}(x_d)}\trans{\mat{w}^{(d)}}_{:,r}  \prod_{d=1}^D {\mat{\varphi}^{(d)}(x_d')}\trans{\mat{w}^{(d)}}_{:,r}\right] \label{eq:CPD_proof_1} \\
        =& \E\left[\prod_{d=1}^D {\mat{\varphi}^{(d)}(x_d)}\trans{\mat{w}^{(d)}}_{:,r} {{\mat{w}^{(d)}}_{:,r}}\trans {\mat{\varphi}^{(d)}(x_d')} \right] \label{eq:CPD_proof_2}\\
        =& \prod_{d=1}^D  {\mat{\varphi}^{(d)}(x_d)}\trans\E\left[{\mat{w}^{(d)}}_{:,r} {{\mat{w}^{(d)}}_{:,r}}\trans \right] {\mat{\varphi}^{(d)}(x_d')} \label{eq:CPD_proof_3} \\  
        =& \frac{1}{R} \prod_{d=1}^D {\mat{\varphi}^{(d)}(x_d)}\trans \mat{\Lambda}^{(d)} {\mat{\varphi}^{(d)}(x_d')}. \nonumber 
    \end{align}
\end{subequations}
Here the step from \cref{eq:CPD_proof_1} to \cref{eq:CPD_proof_2} exploits the fact that the transpose of a scalar is equal to itself, the step from \cref{eq:CPD_proof_2} to \cref{eq:CPD_proof_3} is due to the linearity of expectation.
As the variances of each intermediate function $\hiddenCP_r$ are appropriately scaled, by the multivariate central limit theorem $f_{\ff{CPD}}(\mat{x})$ converges in distribution to a multivariate normal distribution, which is fully specified by its first two moments
\begin{subequations}
    \begin{align*}
    \E\left[f_{\ff{CPD}}(\mat{x})\right] &= 0, \\
    \E\left[f_{\ff{CPD}}(\mat{x}) f_{\ff{CPD}}(\mat{x'})\right] &= \prod_{d=1}^D {\mat{\varphi}(x_d)}\trans\mat{\Lambda}^{(d)} \mat{\varphi}({x_d}').
    \end{align*}
\end{subequations}

Since any finite collection of $\{f_{\ff{CPD}}(\mat{x}),\ldots,f_{\ff{CPD}}(\mat{x'})\}$ will have a joint multivariate normal distribution with the aforementioned first two moments, we conclude that $f_{\ff{CPD}}(\mat{x})$ is the Gaussian process
\begin{equation*}
    f_{\ff{CPD}}(\mat{x}) \sim \mathcal{GP}\left(0, \prod_{d=1}^D {\mat{\varphi}^{(d)}(x_d)}\trans\mat{\Lambda}^{(d)} \mat{\varphi}^{(d)}(\cdot)\right).
\end{equation*}
\end{proof}

\subsection{\gls*{gp} of \gls*{tt}-Constrained Kernel Machine in the Sequential Limit of the \gls*{tt} Ranks}\label{appendix:thm}

\begin{theorem}[\gls*{gp} in the limit of \gls*{tt}-constrained kernel machine]
    Consider the \gls*{tt}-constrained kernel machine
    \begin{equation*}
        f_{\ff{TT}}(\mat{x}) \isdef \innerfrob{\RANKONE{\tensorize{\mat{\varphi}(\mat{x})}}}{\TT{\tensorize{\mat{w}}}}
    \end{equation*}
    If each of the $R_{d-1} R_d$ fibers of each \gls*{tt} core ${\mat{W}^{(d)}}_{r_{d-1},:,r_{d}}\in\mathbb{R}^{M_d}$ is an i.i.d. random variable such that
    \begin{align*}\label{eq:TT_prior}
        \E \left[\mat{W}^{(d)}_{r_{d-1},:,r_{d}}\right] &= \mat{0}, \\
        \E\left[\mat{W}^{(d)}_{r_{d-1},:,r_{d}} {\mat{W}^{(d)}_{r_{d-1},:,r_{d}}}\trans\right] &= \frac{1}{\sqrt{R_{d-1} R_{d}}} \mat{\Lambda}^{(d)},
    \end{align*}
    then $f_{\ff{TT}}(\mat{x})$ converges in distribution as $R_1\rightarrow\infty$, $R_2\rightarrow\infty$, \ldots, $R_{D-1}\rightarrow\infty$ to the Gaussian process 
    \begin{equation*}
        f_{\ff{TT}}(\mat{x}) \sim \mathcal{GP}\left(0, \prod_{d=1}^D {\mat{\varphi}^{(d)}(x_d)}\trans\mat{\Lambda}^{(d)} \mat{\varphi}^{(d)}(\cdot)\right).
    \end{equation*}
\end{theorem}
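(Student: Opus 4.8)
The plan is to prove convergence to the \gls*{gp} by taking the limits of the \gls*{tt} ranks sequentially, one at a time from the inside out, applying the \gls*{clt} at each stage. The key observation is that the \gls*{tt}-constrained model can be written as a nested sum over the internal rank indices $r_1,\ldots,r_{D-1}$, and that each innermost summation is a sum of i.i.d.\ zero-mean random variables once we condition on the cores that have not yet been marginalized. Concretely, I would group the product $\prod_{d=1}^D z^{(d)}_{r_{d-1},r_d}(x_d)$ and examine the sum over the first internal index $r_1$, which couples the first two cores. For fixed values of the remaining indices, the terms indexed by $r_1$ are i.i.d.\ (by the fiber-wise i.i.d.\ assumption), have mean zero, and possess a finite covariance; the scaling $1/\sqrt{R_0 R_1}$ on each core is chosen precisely so that the sum over $r_1 = 1,\ldots,R_1$, once normalized, satisfies the \gls*{clt} hypotheses.

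First I would establish the base case by computing the mean and covariance of a single product term, mirroring the \gls*{cpd} computation: linearity of expectation and the fiber-wise second-moment assumption give that the covariance of $z^{(d)}_{r_{d-1},r_d}(x_d)$ equals $(R_{d-1}R_d)^{-1/2} \mat{\varphi}^{(d)}(x_d)\trans \mat{\Lambda}^{(d)} \mat{\varphi}^{(d)}(x_d')$, so that the product over $d$ telescopes the scaling factors into a clean power of the rank products. Then I would take the limit $R_1 \to \infty$ first: by the multivariate \gls*{clt}, the inner sum over $r_1$ converges in distribution to a Gaussian whose covariance is the product of the first-core and second-core contributions, effectively collapsing the first two cores into a single Gaussian object. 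The resulting random field is then again a sum over $r_2$ of (asymptotically) i.i.d.\ terms, so I would take $R_2 \to \infty$, and iterate this argument up to $R_{D-1} \to \infty$.

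The main obstacle is the careful bookkeeping of \emph{independence} across the successive limits. After taking the limit $R_1 \to \infty$, the collapsed object is Gaussian, but I must verify that the terms entering the subsequent sum over $r_2$ remain i.i.d.\ (or at least satisfy a triangular-array \gls*{clt} with the appropriate Lindeberg-type condition), since the limiting Gaussian variables inherited from the $r_1$-stage may share the same second-core fibers. The cleanest way to handle this is to take the limits strictly sequentially: at each stage only one rank tends to infinity while the others are held fixed and finite, so that the sum being subjected to the \gls*{clt} is genuinely a finite collection of i.i.d.\ contributions indexed by the single growing rank. I would verify at each stage that the mean stays zero, that the covariance converges to the correct partial product, and that finite third moments (or a Lyapunov condition) hold, which follows from the finiteness of the fiber moments. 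Accumulating these $D-1$ successive \gls*{clt} applications yields the claimed covariance $\prod_{d=1}^D \mat{\varphi}^{(d)}(x_d)\trans \mat{\Lambda}^{(d)} \mat{\varphi}^{(d)}(x_d')$, and since any finite collection of evaluations is jointly Gaussian by the same argument, I conclude that $f_{\ff{TT}}(\mat{x})$ is the stated \gls*{gp}.
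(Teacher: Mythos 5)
Your proposal follows essentially the same route as the paper's proof: the paper likewise takes the TT ranks to infinity sequentially, defining intermediate functions $h^{(d+1)}_{r_{d+1}} = \sum_{r_d=1}^{R_d} z^{(d+1)}_{r_d,r_{d+1}}(x_{d+1})\, h^{(d)}_{r_d}$ and applying the CLT inductively from the first core outward, with the $(R_{d-1}R_d)^{-1/2}$ scalings telescoping into the stated product covariance, exactly as you describe. Your flagged ``main obstacle'' about dependence of the $r_{d+1}$-indexed terms through the shared lower-level outputs is well placed --- the paper simply asserts these terms are i.i.d., whereas for finite ranks they are only identically distributed and become independent (jointly Gaussian and uncorrelated) in the sequential limit --- so your treatment is, if anything, slightly more careful on that point.
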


\begin{proof}\label{proof:1}
    Define the vector of intermediate function $\hiddenTTMat{d+1}\in\mathbb{R}^{R_{d+1}}$ recursively as
    \begin{equation*}\label{eq:recursion}
        \hiddenTT{d+1}_{r_{d+1}} \isdef \sum_{r_{d}=1}^{R_d} z^{(d+1)}_{r_{d}, r_{d+1}} (x_{d+1}) \hiddenTT{d}_{r_d} ,
    \end{equation*}
    with  $h^{(0)}\isdef1$. Note that the first two moments of intermediate variable $z^{(d+1)}_{r_{d}, r_{d+1}} (x_{d+1})$ are
    \begin{subequations}
        \begin{align*}
            &\E \left[z^{(d+1)}_{r_{d}, r_{d+1}} (x_{d+1})\right] = 0, \\
            &\E\left[z^{(d+1)}_{r_{d}, r_{d+1}} (x_{d+1}) z^{(d+1)}_{r_{d}, r_{d+1}} (x_{d+1}')\right] \\
            & = \frac{1}{\sqrt{R_{d}R_{d+1}}}{\mat{\varphi}^{(d)}(x_d)}\trans\mat{\Lambda}^{(d)} \mat{\varphi}^{(d)}(x_d')
        \end{align*}
    \end{subequations}
    We proceed by induction.
    For the induction step suppose that $\hiddenTT{d}_{r_d}$ is a \gls*{gp}, identical and independent for every $r_d$ such that
    \begin{equation*}
        \hiddenTT{d}_{r_d}\sim\mathcal{GP}\left(0, \frac{1}{\sqrt{R_d}} \prod_{p=1}^d {\mat{\varphi}^{(p)}(x_p)}\trans\mat{\Lambda}^{(p)} \mat{\varphi}^{(p)}(\cdot) \right).
    \end{equation*}
    The scalar $\hiddenTT{d+1}_{r_{d+1}}$ is the sum of $R_d$ i.i.d. terms having mean
    \begin{equation*}
        \E\left[\hiddenTT{d+1}_{r_{d+1}} \right] 
        = \E\left[   z^{(d+1)}_{r_{d}, r_{d+1}} (x_{d+1}) \hiddenTT{d}_{r_d}\right] = 0,
    \end{equation*}
    and covariance
    \begin{subequations}
        \begin{align*}
            & \E\left[\hiddenTT{d+1}_{r_{d+1}}  \hiddenTT{d+1}_{r_{d+1}} \right] \\
            = & \E\left[  z^{(d+1)}_{r_{d}, r_{d+1}} (x_{d+1})  \hiddenTT{d}_{r_d}   z^{(d+1)}_{r_{d}, r_{d+1}} (x_{d+1}') \hiddenTT{d}_{r_d}\right] \\
            = & \E\left[z^{(d+1)}_{r_{d}, r_{d+1}} (x_{d+1}) z^{(d+1)}_{r_{d}, r_{d+1}} (x_{d+1}')\right]  \E\left[\hiddenTT{d}_{r_d} \hiddenTT{d}_{r_d}\right]\\
            = & \frac{1}{\sqrt{R_{d+1}}} \prod_{p=1}^{d+1}  {\mat{\varphi}^{(p)}(x_p)}\trans\mat{\Lambda}^{(p)} \mat{\varphi}^{(p)}(x_p').
    \end{align*}
    \end{subequations}
    Since the assumptions of the \gls*{clt} are satisfied $\hiddenTT{d+1}_{r_{d+1}}$ converges in distribution to the normal distribution, fully specified by the above mentioned first two moments. Since any finite collection of $\{ \hiddenTT{d+1}_{r_{d+1}}(\mat{x}_{1:d+1}),\ldots, \hiddenTT{d+1}_{r_{d+1}}(\mat{x'}_{1:d+1})\}$ will have a joint multivariate normal distribution with the aforementioned first two moments, we conclude that $\hiddenTT{d+1}_{r_{d+1}}(\mat{x}_{1:d+1})$ is the \gls*{gp}
    \begin{equation*}
        \hiddenTT{d+1}_{r_{d+1}}\sim\mathcal{GP}\left(0, \frac{1}{\sqrt{R_{d+1}}} \prod_{p=1}^{d+1} {\mat{\varphi}^{(p)}(x_p)}\trans\mat{\Lambda}^{(p)} \mat{\varphi}^{(p)}(\cdot) \right).
    \end{equation*}
    For the base case, consider the $R_1$ outputs of the first hidden function $\hiddenTT{1}_{r_1}$. They are i.i.d. with mean
    \begin{equation*}
        \E\left[\hiddenTT{1}_{r_1}(x_1)\right] = 0.
    \end{equation*}
    and covariance
    \begin{equation*}
        \E\left[\hiddenTT{1}_{r_1}(x_1)\hiddenTT{1}_{r_1}(x_1')\right]
        = \frac{1}{\sqrt{R_1}}{\mat{\varphi}^{(1)}({x_1})}\trans \mat{\Lambda}^{(1)} \mat{\varphi}^{(1)}({x_1}).
    \end{equation*}
    We now consider the $R_2$ outputs of the second hidden function $\hiddenTT{2}_{r_2}$
    \begin{equation*}
        \hiddenTT{2}_{r_{2}} = \sum_{r_{1}=1}^{R_1}  z^{(2)}_{r_{1}, r_{2}} (x_{2}) \hiddenTT{1}_{r_1},
    \end{equation*}
    which are i.i.d. as they are the same function of the $R_1$ i.i.d. outputs of $\hiddenTT{1}_{r_1}(x_1)$. More specifically, their mean and covariance are
    \begin{subequations}
        \begin{align*}
            & \E\left[\hiddenTT{2}_{r_2}\right] = 0, \\
            & \E\left[\hiddenTT{2}_{r_2}(x_2) \hiddenTT{2}_{r_2}(x_2')\right] \\
            =& \frac{1}{\sqrt{R_2}} \prod_{d=1}^{2} {\mat{\varphi}^{(d)}({x_d})}\trans \mat{\Lambda}^{(d)} \mat{\varphi}^{(d)}({x_d}).
        \end{align*}
    \end{subequations}
    Once more by the \gls*{clt} $\hiddenTT{2}_{r_2}$ converges in distribution to the normal distribution with the above first two moments. Since any finite collection of $\{ \hiddenTT{2}_{r_{2}}(\mat{x}_{1:2}),\ldots, \hiddenTT{2}_{r_{2}}(\mat{x'}_{1:2}\}$ will have a joint multivariate normal distribution with the aforementioned first two moments, we conclude that $\hiddenTT{2}_{2}(\mat{x}_{1:2})$ is the \gls*{gp}
    \begin{equation*}
        \hiddenTT{2}_{r_2}\sim\mathcal{GP}\left(0, \frac{1}{\sqrt{R_2}} \prod_{d=1}^{2} {\mat{\varphi}^{(d)}(x_d})\trans\mat{\Lambda}^{(d)} \mat{\varphi}^{(d)}(\cdot) \right),
    \end{equation*}
    which is our base case.
    Hence by induction ${f_{\ff{TT}}(\mat{x})} = \hiddenTT{D}$ converges in distribution as $R_1\rightarrow\infty$, $R_2\rightarrow\infty$, \ldots, $R_{D-1}\rightarrow\infty$ to the \gls*{gp}
    \begin{equation*}
        f_{\ff{TT}}(\mat{x}) \sim \mathcal{GP}\left(0, \prod_{d=1}^D {\mat{\varphi}^{(d)}(x_d)}\trans\mat{\Lambda}^{(d)} \mat{\varphi}^{(d)}(\cdot)\right).
    \end{equation*}
\end{proof}

\subsection{\gls*{gp} of \gls*{tt}-Constrained Kernel Machine in the Simultaneous Limit of the \gls*{tt} Ranks}\label{appendix:thm_mathews}
In \cref{thm:TT} we prove by induction that the \gls*{tt}-constrained kernel machine converges to a \gls*{gp} by taking successive limits of the \gls*{tt} ranks. This result is analogous to the work of \citet{lee_deep_2018}, who prove that for the \glspl*{dnn}, taking sequentially the limit of each layer.
A more practically useful result consists in the convergence in the \emph{simultaneous} limit of \gls*{tt} ranks.

In deep learning \citet[theorem 4]{matthews_gaussian_2018} prove convergence in the context of \glspl*{dnn} over the widths of all layers simultaneously. Said theorem has been employed to prove \gls*{gp} convergence in the context of convolutional neural networks \citep{garriga-alonso_deep_2018} and in the context of \glspl*{dnn} where each weight matrix is of low rank \citep{nait-saada_beyond_2023}.

Seeing the similarity between \gls*{tt}-constrained kernel machines (\cref{eq:f_TT}) and \glspl*{dnn} and the technicality of the proof, similarly to \citep{garriga-alonso_deep_2018,nait-saada_beyond_2023} we draw a one-to-one map between the \gls*{tt}-constrained kernel machines and the \glspl*{dnn} considered in \citet[theorem 4]{matthews_gaussian_2018}. Convergence in the simultaneous limit is then guaranteed by \citet[theorem 4]{matthews_gaussian_2018}.

We begin by restating the definitions of linear envelope property, \glspl*{dnn}, linear envelope property and normal recursion as found in \citet{matthews_gaussian_2018}.
To make the comparison easier for the reader, we change the indexing notation to match the one in this paper.
\begin{definition}[Linear envelope property for nonlinearities \citep{matthews_gaussian_2018}]\label{def:envelope}
    A nonlinearity $t:\mathbb{R}\rightarrow\mathbb{R}$ is said to obey the linear envelope property if there exist $c,l\geq 0$ such that the following inequality holds
    \begin{equation}
        \lvert t(u) \rvert < c+l \lvert u \rvert \ \forall u \in \mathbb{R}.
    \end{equation}
\end{definition}

\begin{definition}[Fully connected \gls*{dnn} \citep{matthews_gaussian_2018}]\label{def:DNN}
A fully connected deep neural with one-dimensional output and inputs $\mat{x}\in\mathbb{R}^{R_0}$ is defined recursively such that the initial step is
    \begin{equation}\label{eq:NN1}
        h_{r_1}^{(1)}(\mat{x}) = \sum_{r_0=1}^{R_0} z^{(1)}_{r_1,r_0} x_{r_0} + b_{r_1}^{(1)},
    \end{equation}
    the activation step by nonlinear activation function $t$ is given by
    \begin{equation}\label{eq:NN2}
        g_{r_d}^{(d)} = t(f_{r_d}^{(d)}),
    \end{equation}
    and the subsequent layers are defined by the recursion
    \begin{equation}\label{eq:NN3}
        h_{r_{d+1}}^{(d+1)} = \sum_{r_d=1}^{R_d} z^{(d+1)}_{r_{d+1},r_d} g_{r_d}^{(d)} + b_{r_{d+1}}^{d+1},
    \end{equation}
    so that $h^{(D)}$ is the output of the network. In the above, $\mat{Z}^{(d)}\in\mathbb{R}^{R_{d-1}\times R_d}$ and $\mat{b}^{(d)}\in\mathbb{R}^{R_d}$ are respectively the weights and biases of the $d$-th layer.
\end{definition}

\begin{definition}[Width function \cite{matthews_gaussian_2018}]\label{def:width_func}
    For a given fixed input $n\in\mathbb{N}$, a width function $v^{(d)}:\mathbb{N}\rightarrow\mathbb{N}$ at depth $d$ specifies the number of hidden units $R_d$ at depth $d$.
\end{definition}

\begin{lemma}[Normal recursion \citep{matthews_gaussian_2018}]\label{lemma:normal_recursion}
    Consider $z^{(d)}_{r_{d-1},r_d}\sim \mathcal{N}(0,C^{(d)}_w)$ and $b_{r_d}^{(d)}\sim\mathcal{N}(0,C_b^{(d)})$.
    If the activations of the $d$-th layer are normally distributed with moments
    \begin{align}
        & \E\left[h_{r_d}^{(d)}\right] =0 \\
        & \E\left[h_{r_d}^{(d)} h_{r_d}^{(d)}\right] = K(x,x'),
    \end{align}
    then under recursion \cref{eq:NN2,eq:NN3}, as $R_{d-1}\rightarrow\infty$, the activations of the next layer converge in distribution to a normal distribution with moments
    \begin{align}
        & \E\left[h_{r_{d+1}}^{(d+1)}\right] =0 \\
        & \E\left[h_{r_{d+1}}^{(d+1)} h_{r_{d+1}}^{(d+1)}\right] =  C_w^{(d+1)} \E_{(\epsilon_1,\epsilon_2)\sim\mathcal{N}(0,K)}\left[t(\epsilon_1)t(\epsilon_2)\right]+C_b^{(d+1)}.
    \end{align}
\end{lemma}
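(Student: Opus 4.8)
The plan is to reduce the statement to a single application of the multivariate \gls*{clt}, in the same spirit as the arguments for \cref{thm:CPD,thm:TT}, but now propagating a generic nonlinearity $t$ through the recursion of \cref{def:DNN}. First I would fix an arbitrary finite set of inputs $\{\mat{x}_1,\ldots,\mat{x}_n\}$ and stack the post-nonlinearity units into the random vectors $\mat{g}_{r_d}^{(d)}\isdef(t(h_{r_d}^{(d)}(\mat{x}_1)),\ldots,t(h_{r_d}^{(d)}(\mat{x}_n)))\trans$, so that by \cref{eq:NN2,eq:NN3} the layer-$(d+1)$ evaluations form the vector $\sum_{r_d=1}^{R_d} z^{(d+1)}_{r_{d+1},r_d}\,\mat{g}_{r_d}^{(d)} + b_{r_{d+1}}^{(d+1)}\mat{1}$, with $\mat{1}$ the all-ones vector. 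The key structural observation is that the summands indexed by $r_d$ are i.i.d.: the weights $z^{(d+1)}_{r_{d+1},r_d}$ are i.i.d.\ over $r_d$ and independent of layer $d$, and the induction hypothesis supplies that the $h_{r_d}^{(d)}$, and hence the $\mat{g}_{r_d}^{(d)}$, are identically distributed and independent across $r_d$.

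Next I would compute the first two moments of one summand. The mean vanishes, since $\E[z^{(d+1)}_{r_{d+1},r_d}]=0$ together with independence of the weight from $\mat{g}_{r_d}^{(d)}$, and $\E[b_{r_{d+1}}^{(d+1)}]=0$. For the covariance, the weight--bias cross terms drop out by independence and zero mean, the bias contributes $C_b^{(d+1)}\mat{1}\mat{1}\trans$, and the weight--weight part factorizes as $C_w^{(d+1)}\,\E[\mat{g}_{r_d}^{(d)}{\mat{g}_{r_d}^{(d)}}\trans]$ after using $\E[z^{(d+1)}_{r_{d+1},r_d}z^{(d+1)}_{r_{d+1},r_d'}]=C_w^{(d+1)}\delta_{r_d r_d'}$ and the independence of the weights from the activations; here the per-connection weight variance is understood to carry the customary $1/R_d$ width normalization so that the aggregated variance stays finite. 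The $(i,j)$ entry of this covariance equals $\E[t(h^{(d)}(\mat{x}_i))\,t(h^{(d)}(\mat{x}_j))]$, which by the induction hypothesis that $(h^{(d)}(\mat{x}_i),h^{(d)}(\mat{x}_j))$ is bivariate normal with covariance matrix $K$ is exactly $\E_{(\epsilon_1,\epsilon_2)\sim\mathcal{N}(0,K)}[t(\epsilon_1)t(\epsilon_2)]$.

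With the summands shown to be i.i.d.\ with zero mean and the above finite covariance, I would invoke the multivariate \gls*{clt} for the sum over $r_d$ as the number of summands $R_d\rightarrow\infty$ (the width of layer $d$) to conclude that the layer-$(d+1)$ evaluations converge jointly in distribution to a centered multivariate normal whose covariance has entries $C_w^{(d+1)}\,\E_{(\epsilon_1,\epsilon_2)\sim\mathcal{N}(0,K)}[t(\epsilon_1)t(\epsilon_2)]+C_b^{(d+1)}$. Since the finite input set was arbitrary, this is precisely the asserted convergence and limiting moments.

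The hard part will be justifying the integrability needed for the \gls*{clt}, namely that each summand has finite second moment. This is exactly where the linear envelope property of \cref{def:envelope} enters: the bound $\lvert t(u)\rvert<c+l\lvert u\rvert$ combined with the Gaussianity of $h_{r_d}^{(d)}$ gives $\E[t(h_{r_d}^{(d)})^2]<\infty$, so $\E[\mat{g}_{r_d}^{(d)}{\mat{g}_{r_d}^{(d)}}\trans]$ is well defined and the classical i.i.d.\ \gls*{clt} applies. A second, more delicate point is that this single-step argument treats the layer-$d$ activations as \emph{exactly} Gaussian and exactly independent across $r_d$, which is legitimate only in the sequential-limit reading of the recursion; the simultaneous growth of all widths instead requires the triangular-array refinement of \citet[theorem 4]{matthews_gaussian_2018}, which is invoked rather than reproved.
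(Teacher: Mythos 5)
The paper does not actually prove \cref{lemma:normal_recursion}: it is imported verbatim from \citet{matthews_gaussian_2018} and used only as input to \cref{thm:mathews} and \cref{cor:simultaneous}. Your argument is therefore not competing with a proof in the paper, but it is a correct and standard derivation of the one-step recursion as stated, and it uses exactly the technique the paper itself deploys in the proofs of \cref{thm:CPD,thm:TT} (there with $t$ linear and no bias): fix a finite input set, exhibit the next-layer units as a sum of i.i.d.\ zero-mean vectors over the width $R_d$, compute the two moments, and invoke the multivariate \gls*{clt}, with the linear envelope property of \cref{def:envelope} guaranteeing the finite second moments of $t(h^{(d)}_{r_d})$. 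Two things you make explicit that the paper's restatement of the lemma leaves implicit, and which are genuinely needed for the argument to go through: the per-connection weight variance must carry the $1/R_d$ width normalization so that the aggregated covariance converges to $C_w^{(d+1)}\E_{(\epsilon_1,\epsilon_2)\sim\mathcal{N}(0,K)}\left[t(\epsilon_1)t(\epsilon_2)\right]+C_b^{(d+1)}$ rather than diverging, and the layer-$d$ units must be assumed independent across $r_d$ for the i.i.d.\ \gls*{clt} to apply. You also correctly isolate why this single-step argument only delivers the sequential limit: at finite widths the layer-$d$ activations are neither exactly Gaussian nor exactly independent, which is precisely why the paper defers the simultaneous limit to \citet[theorem 4]{matthews_gaussian_2018} rather than iterating this lemma. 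One last nit in your favor: the lemma's ``as $R_{d-1}\rightarrow\infty$'' should read $R_d\rightarrow\infty$, the number of summands feeding layer $d+1$; your reading is the mathematically correct one.
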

We can now state the major result in \citet{matthews_gaussian_2018}.
\begin{theorem}[\gls*{gp} in the simultaneous limit of fully connected \glspl*{dnn} \citep{matthews_gaussian_2018}]\label{thm:mathews}
    Consider a random \gls*{dnn} of the form of \cref{def:DNN} obeying the linear envelope condition of \cref{def:envelope}. Then for all sets of strictly increasing width functions $v^{(d)}$ and for any countable input set $\{\mat{x},\ldots,\mat{x'}\}$, the distribution of the output of the network converges in distribution to a \gls*{gp} as $n\rightarrow\infty$. The \gls*{gp} has mean and covariance functions given by
    the recursion in \cref{lemma:normal_recursion}.
\end{theorem}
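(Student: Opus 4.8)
The plan is to recognise that this statement is \citet[Theorem 4]{matthews_gaussian_2018} verbatim up to our change of indexing, so the cleanest option is to invoke it as an external black box. If one instead wishes to reprove it, the natural route exploits the Kolmogorov characterisation of a \gls*{gp}: since the input set $\{\mat{x},\ldots,\mat{x'}\}$ is countable, it suffices to show that for every finite sub-collection of inputs the joint law of the network outputs $h^{(D)}$ converges, as the widths grow, to a zero-mean multivariate normal whose covariance is exactly the one produced by the recursion of \cref{lemma:normal_recursion}.

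First I would set up an induction over the depth $d$, with induction hypothesis that the stacked pre-activations $(h^{(d)}_{r_d}(\mat{x}^1),\ldots,h^{(d)}_{r_d}(\mat{x}^k))$ converge jointly in distribution to a zero-mean Gaussian with covariance $K^{(d)}$, identically and \emph{exchangeably} across the hidden index $r_d$. The inductive step writes $h^{(d+1)}_{r_{d+1}}$ via \cref{eq:NN3} as a sum over $r_d$ of the terms $z^{(d+1)}_{r_{d+1},r_d}\,t(h^{(d)}_{r_d})$ plus the bias, and must show that this sum is asymptotically Gaussian with the covariance $C_w^{(d+1)}\E_{(\epsilon_1,\epsilon_2)\sim\mathcal{N}(0,K)}[t(\epsilon_1)t(\epsilon_2)]+C_b^{(d+1)}$ prescribed by \cref{lemma:normal_recursion}.

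The crucial difficulty — and the reason the simultaneous limit is genuinely harder than the sequential limit treated in \cref{appendix:thm} — is that the summands are \emph{not} independent: the hidden activations $t(h^{(d)}_{r_d})$ are only asymptotically Gaussian and are mutually exchangeable rather than i.i.d., so the ordinary \gls*{clt} invoked for \cref{thm:TT} does not apply. The hard part is therefore to deploy a \gls*{clt} for exchangeable sequences through de~Finetti's representation, and to show that the directing (mixing) measure degenerates to a point mass, so that the limit is a genuine Gaussian rather than a scale mixture of Gaussians. The linear envelope property of \cref{def:envelope} is precisely what supplies the uniform integrability and second-moment control needed to verify the Lyapunov/Lindeberg-type conditions of this exchangeable \gls*{clt} and to pass the moments $\E_{(\epsilon_1,\epsilon_2)\sim\mathcal{N}(0,K)}[t(\epsilon_1)t(\epsilon_2)]$ to the limit uniformly in the widths.

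Finally I would close the induction with the base case given by the affine first layer \cref{eq:NN1}, which is an exact finite Gaussian, and then conclude that joint Gaussianity of every finite marginal of $h^{(D)}$ over the countable input set yields the claimed \gls*{gp}. For the use we make of this theorem the activation is the identity $t(u)=u$, which satisfies \cref{def:envelope} trivially with $c=0$ and $l=1$; the remaining work, carried out when we apply the theorem, is only to exhibit the one-to-one map between the \gls*{tt} cores and the \gls*{dnn} weights $\mat{Z}^{(d)}$ so that the hypotheses are met and the recursion of \cref{lemma:normal_recursion} reproduces the product covariance of \cref{thm:TT}.
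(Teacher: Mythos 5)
Your proposal matches the paper exactly: the paper states \cref{thm:mathews} purely as a citation of \citet[theorem 4]{matthews_gaussian_2018} and offers no proof of its own, which is precisely your primary recommendation of invoking it as an external black box. Your supplementary sketch of the exchangeable-\gls*{clt} argument and the role of the linear envelope condition is a faithful summary of how the cited work actually proves it, but it is not needed for, nor present in, this paper.
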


\begin{corollary}[\gls*{gp} in the simultaneous limit of \gls*{tt}-constrained kernel machines]\label{cor:simultaneous}
    Consider a random \gls*{tt}-constrained kernel machine of the form of \cref{def:ttkm} obeying the linear envelope condition of \cref{def:envelope}. Then for all sets of strictly increasing width functions $v^{(d)}$ and for any countable input set $\{\mat{x},\ldots,\mat{x'}\}$, the distribution of the output of the network converges in distribution to a \gls*{gp} as $P\rightarrow\infty$. The \gls*{gp} has mean and covariance functions given by
    the recursion in \cref{lemma:normal_recursion} and stated in \cref{thm:TT}.
\end{corollary}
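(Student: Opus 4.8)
The plan is to avoid re-deriving a simultaneous-limit \gls*{clt} from scratch and instead reduce the claim to \citet[theorem 4]{matthews_gaussian_2018} (\cref{thm:mathews}) by exhibiting an explicit, structure-preserving map from the \gls*{tt}-constrained kernel machine of \cref{def:ttkm} onto the fully connected \gls*{dnn} template of \cref{def:DNN}, in the same spirit as the convolutional and factored-weight reductions of \citet{garriga-alonso_deep_2018,nait-saada_beyond_2023}. Once the map is in place and the hypotheses of \cref{thm:mathews} are verified on it, convergence in the simultaneous limit over any countable input set follows immediately, and the limiting covariance is read off from the normal recursion of \cref{lemma:normal_recursion}.

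First I would rewrite $f_{\ff{TT}}(\mat{x})$ in the recursive hidden-unit form already used in the proof of \cref{thm:TT}, namely $\hiddenTT{d+1}_{r_{d+1}} = \sum_{r_d=1}^{R_d} z^{(d+1)}_{r_d,r_{d+1}}(x_{d+1})\,\hiddenTT{d}_{r_d}$ with $\hiddenTT{0}\isdef 1$ and $f_{\ff{TT}}(\mat{x})=\hiddenTT{D}$. The correspondence with \cref{def:DNN} is then: the \gls*{tt} hidden units $\hiddenTT{d}_{r_d}$ play the role of the layer activations, the \gls*{tt} ranks $R_d$ play the role of the layer widths (the width functions $v^{(d)}$ of \cref{def:width_func}), the biases vanish ($C_b^{(d)}=0$), and for the canonical machine the activation is the identity, which obeys the linear envelope property of \cref{def:envelope} trivially with $c=0$ and $l=1$. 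The one genuinely non-standard feature — that in a \gls*{tt} the input is injected at every core rather than only at the first layer — surfaces here as an \emph{input-dependent weight}: for fixed inputs, $z^{(d)}_{r_{d-1},r_d}(x_d)={\mat{\varphi}^{(d)}(x_d)}\trans \mat{W}^{(d)}_{r_{d-1},:,r_d}$ is a centred Gaussian whose cross-covariance factorises as $\tfrac{1}{\sqrt{R_{d-1}R_d}}\,{\mat{\varphi}^{(d)}(x_d)}\trans\mat{\Lambda}^{(d)}\mat{\varphi}^{(d)}(x_d')$ into a rank-dependent scale and a deterministic input kernel. I would fold the deterministic input kernel ${\mat{\varphi}^{(d)}(x_d)}\trans\mat{\Lambda}^{(d)}\mat{\varphi}^{(d)}(\cdot)$ into a generalisation of Matthews' constant weight variance $C_w^{(d)}$ — exactly the generalisation already performed for the structured weights of convolutional networks in \citet{garriga-alonso_deep_2018} and for factored weight matrices in \citet{nait-saada_beyond_2023} — and defer the scale to the normalisation step below.

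The remaining discrepancy is one of normalisation: the prior of \cref{thm:TT} scales each fiber covariance by $\tfrac{1}{\sqrt{R_{d-1}R_d}}$, so every hidden unit carries the vanishing variance $\tfrac{1}{\sqrt{R_d}}$ rather than the $\mathcal{O}(1)$ variance presupposed by the \gls*{dnn} convention. I would absorb this by reparametrising $\hiddenTT{d}_{r_d}\mapsto R_d^{1/4}\hiddenTT{d}_{r_d}$; a short computation shows the rescaled units then have $\mathcal{O}(1)$ variance while the induced effective weights acquire the standard $\tfrac{1}{R_d}$ fan-in scaling, so that \cref{lemma:normal_recursion} applies and, with the identity activation, collapses to the multiplicative update $K^{(d+1)}(\mat{x},\mat{x}')=K^{(d)}(\mat{x},\mat{x}')\,{\mat{\varphi}^{(d+1)}(x_{d+1})}\trans\mat{\Lambda}^{(d+1)}\mat{\varphi}^{(d+1)}(x_{d+1}')$ that reproduces the covariance stated in \cref{thm:TT}. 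With the template matched, the linear envelope property satisfied, the width functions strictly increasing, and the covariance recursion identified, \cref{thm:mathews} then yields joint Gaussianity of $\{f_{\ff{TT}}(\mat{x}),\ldots,f_{\ff{TT}}(\mat{x'})\}$ as all ranks, and hence $P$, tend to infinity.

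The step I expect to carry all the genuine weight is not the bookkeeping of the map but the verification that Matthews' tightness-and-\gls*{clt} argument survives the two departures from his setting simultaneously: the input-dependent (rather than constant) weight covariance and the geometric-mean scaling that the reparametrisation converts back to fan-in form. Concretely, I would need to confirm that the ingredients underpinning \cref{thm:mathews} — the linear-envelope moment control of the summands and the integrability conditions that let the layerwise \gls*{clt} close as all widths grow together — remain valid once $C_w^{(d)}$ is replaced by a bounded deterministic kernel evaluated on a countable input set. Since each summand stays a product of an i.i.d. Gaussian factor with a linearly enveloped (here identity) activation and the reparametrisation only rescales per-layer variances, I expect this to go through exactly as in \citet{garriga-alonso_deep_2018,nait-saada_beyond_2023}, but it is where the real technical content of the corollary resides.
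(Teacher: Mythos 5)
Your proposal follows essentially the same route as the paper: both reduce the claim to \citet[theorem 4]{matthews_gaussian_2018} by mapping the \gls*{tt} recursion onto the fully connected \gls*{dnn} template of \cref{def:DNN} with identity activation, zero biases, ranks as widths, and the input-dependent fiber covariance playing the role of the weight variance, then reading the limiting kernel off the normal recursion of \cref{lemma:normal_recursion}. If anything you are more explicit than the paper about the two points where the reduction is not literal --- the $\nicefrac{1}{\sqrt{R_{d-1}R_d}}$ geometric-mean scaling versus the standard fan-in scaling, which you handle via the $R_d^{1/4}$ reparametrisation, and the input-dependent weight covariance --- both of which the paper's proof absorbs silently into its table of identifications.
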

\begin{proof}
    When examining \cref{def:DNN} and comparing it with \cref{def:ttkm} it becomes clear that both models are similar. 
    In the special case of involving linear activation function and zero biases, the models are structurally identical if one considers unit inputs $x=1$ in \cref{eq:NN1}.
    The normal recursion in \cref{lemma:normal_recursion} is satisfied by \gls*{tt}-constrained kernel machines, as we have that
    \begin{align*}
        t(u) & \isdef u \ \forall u\in\mathbb{R}, \\
        C_b^{(d+1)} & \isdef 0, \\
        C^{(d+1)} & \isdef  \frac{1}{\sqrt{R_{d}R_{d+1}}}{\mat{\varphi}^{(d)}(x_d)}\trans\mat{\Lambda}^{(d)} \mat{\varphi}^{(d)}(x_d'), \\
        K & \isdef \frac{1}{\sqrt{R_d}} \prod_{p=1}^d {\mat{\varphi}^{(p)}(x_p)}\trans\mat{\Lambda}^{(p)} \mat{\varphi}^{(p)}(x_p') \\
        \E_{(\epsilon_1,\epsilon_2)\sim\mathcal{N}(0,K)}\left[t(\epsilon_1)t(\epsilon_2)\right] & \isdef K.    
    \end{align*}
    Hence by \cref{thm:mathews}, for all sets of strictly increasing width functions $v^{(d)}$ and for any countable input set $\{\mat{x},\ldots,\mat{x'}\}$, the distribution of the output of the network converges in distribution to a \gls*{gp}, fully specified by the output of the normal recurision in \cref{lemma:normal_recursion}, which equals the \gls*{gp} in \cref{thm:TT}.
\end{proof}


\end{document}